\newtheorem{assumption}{Assumption}
\DeclarePairedDelimiter{\norm}{\lVert}{\rVert}
\DeclareMathOperator{\diag}{diag}
\DeclareMathOperator{\vect}{vec}
\DeclareMathOperator{\lcp}{LCP}
\newcommand{\defeq}{\vcentcolon=}
\newcommand{\qed}{$\blacksquare$}
\newcommand*{\tran}{^{\mkern-1.5mu\mathsf{T}}}
\newcommand{\mathleft}{\@fleqntrue\@mathmargin\parindent}
\newcommand{\mathcenter}{\@fleqnfalse\@mathmargin\parindent}
\title[Learning Linear Complementarity Systems]{Learning Linear Complementarity Systems}
\author{%
 \Name{Wanxin Jin} \Email{jinwx@seas.upenn.edu}\\
 \Name{Alp Aydinoglu} \Email{alpayd@seas.upenn.edu}\\
 \Name{Mathew Halm} \Email{mhalm@seas.upenn.edu}\\
 \Name{Michael Posa} \Email{posa@seas.upenn.edu} \\
 \addr University of Pennsylvania, Philadelphia, PA 19104, USA%
}
\begin{document}

\maketitle

\begin{abstract}%
This paper investigates the learning, or system identification, of a class of piecewise-affine dynamical systems known as linear complementarity systems (LCSs). We propose a violation-based loss which enables efficient learning of the LCS parameterization, without prior knowledge of the hybrid mode boundaries, using gradient-based methods. The proposed violation-based loss incorporates both dynamics prediction loss and a novel complementarity - violation loss. We show several properties attained by this loss formulation, including its differentiability, the efficient computation of first- and second-order derivatives, and its relationship to the traditional prediction loss, which strictly enforces complementarity. We apply this violation-based loss formulation to learn LCSs with tens of thousands of (potentially stiff) hybrid modes. The results demonstrate a state-of-the-art ability to identify piecewise-affine dynamics, outperforming methods which must differentiate through non-smooth linear complementarity problems.

\end{abstract}

\begin{keywords}%
  Linear complementarity system, linear complementarity problem, hybrid system identification. 
\end{keywords}

\section{Introduction}
Many physical systems of interest are well captured by multi-modal or hybrid representations. For example, robotics problems which treat contact with the environment  \citep{stewart2000implicit, brogliato1999nonsmooth}, optimal control problems \citep{bemporad2000piecewise}, and control of networks \citep{heemels2002dynamic}, all exhibit switching or hybrid properties.

In this work, we are interested in system identification/model learning of multi-modal systems. We focus on piecewise-affine (PWA) models as they can sufficiently describe the multi-modal nature of dynamics due to the approximation properties of affine functions \citep{breiman1993hinging, lin1992canonical} but are tractable enough for control tasks due to their simple (affine) structure over polyhedral regions \citep{bemporad1999control}. Even though PWA models are widely used, it is well-known that PWA regression is NP-hard in general (see \citep{lauer2015complexity} for a detailed analysis), because it requires simultaneous classification of the data points into modes and the regression of a submodel for each mode.

In this paper,  we consider  PWA models in the context of  linear complementarity systems (LCSs) \citep{heemels2000linear}. We focus on a subclass of LCS models (with P-matrix assumption) that are equivalent to continuous piecewise affine models \citep{heemels2001equivalence, camlibel2007lyapunov}. LCS models are efficient representations of PWA models and an LCS has the ability to represent/approximate a system with large number of hybrid modes compactly, with only few complementarity variables. In some cases, an LCS with $n_\lambda$ complementarity variables is equivalent to a PWA model with $2^{n_\lambda}$ modes. Many robotics problems that involve contact can be efficiently locally approximated via LCS models, e.g.,  we have exploited  the LCS representation to enable contact-aware \citep{aydinoglu2021stabilization} and real-time control of robotic tasks \citep{aydinoglu2021real}.
In this work, we propose an approach that  learns an LCS from state-input data of a hybrid system, which does not contain any prespecified number of modes. The approach is able to identify LCS models by proposing an implicit loss function.

\subsection{Related Work}

Many successful approaches in identifying piecewise models have been proposed over the years. See the survey paper \citep{paoletti2007identification} for a detailed overview. Mixed integer formulations that mainly focus on hinging hyperplanes and piecewise affine Wiener models have been proposed \citep{roll2004identification} but as the number of integer variables scale with number of data points such approaches are only applicable in small data regime. On the contrary, researchers have also focused on convex formulations where first they estimate a set of submodels and then select few of them that explains the data \citep{elhamifar2014adaptive}, but the approach relies on restricting the parameter space and can be overly conservative.
Many alternate approaches that enable PWA system identification from data exist such as \citep{ferrari2003clustering, nakada2005identification,bemporad2005bounded,hartmann2015identification,du2020semi}. 
Researchers also suggested recursive PWA identification algorithms \citep{bako2011recursive}, \citep{breschi2016piecewise}. 
Most of the above methods are clustering-based where a predetermined number of models are identified and each training data point is associated with one of the models. Then, linear separation techniques are used to compute the polyhedral partitions. This iterative nature of the methods can lead to overly conservative, suboptimal solutions. Approaches that simultaneously cluster, PWL-separate and fit \citep{bemporad2021piecewise} rely heavily on initial assignment of data points to clusters. Unlike our approach, none of the methods above have been tested on identifying PWA functions with thousands of partitions, and most of them have been only tested on functions with less than $30$ pieces.

For more expressive models such as deep neural networks, researchers have explored the positive effect of imposing structured knowledge to capture the multi-modality \citep{de2018end, li2018learning, battaglia2016interaction}. Particularly in robotics, special emphasis has been on multi-body systems with frictional contact \citep{geilinger2020add} and it has been shown that imposing structure leads to accurate, sample efficient strategies \citep{pfrommer2020contactnets}. 
Similar work has demonstrated the difficulty inherent in learning non-smooth dynamical systems without exploiting particular structures \citep{parmar2021fundamental}. Researchers have also explored learning models as functionals of signed-distance fields \citep{driess2021learning}. These methods lead to rich, accurate but complex models that are not amenable to techniques of model-based control. On the contrary, here our focus is on simple models such as PWA models that enable model-based control while sufficiently capturing the hybrid dynamics.

\bigskip

\noindent
\textbf{Notation:} \quad In this paper, regular and bold lowercase letters  represent scalar and vectors, respectively. Uppercase letters represent matrices. For vector $\boldsymbol{v}\in\mathbb{R}^n$, $\boldsymbol{v}[i]$ is the $i$-th entry, $i=1,2,...,n$.  $\diag(\boldsymbol{v})$ is to diagonalize a vector $\boldsymbol{v}$ into a matrix.   $\vect{(A)}$  denotes the vectorlization of a matrix ${A}$ into a column vector; $\otimes $ is the Kronecker product. $I_n$ denotes the identity matrix with size of $n\times n$.   $A\succ 0$ means   symmetric  $A$ is positive definite.

\section{Problem Statement}
Consider the following discrete-time linear complementarity system (LCS), where the state evolution is governed by  a linear  dynamics in (\ref{equ.lcs.1}) and a linear complementarity problem (LCP) in  (\ref{equ.lcs.2}):
\begin{subequations}\label{equ.lcs}
	\begin{align}
	&\boldsymbol{x}_{t+1}=A\boldsymbol{x}_t+B\boldsymbol{u}_t+C\boldsymbol{\lambda}_t+\boldsymbol{d}, \label{equ.lcs.1}\\
	&\boldsymbol{0}\leq\boldsymbol{\lambda}_t \perp D\boldsymbol{x}_t  +E\boldsymbol{u}_t + F\boldsymbol{\lambda}_t + \boldsymbol{c} \geq \boldsymbol{0}. \label{equ.lcs.2}
	\end{align}
\end{subequations}
Here, $\boldsymbol{x}_t\in \mathbb{R}^{n_x}$ and $\boldsymbol{u}_t\in \mathbb{R}^{n_u}$ are the system state and input at time step $t$, respectively; and  $\boldsymbol{\lambda}_t\in\mathbb{R}^r$ is the complementarity variable at time step $t$.   $A\in\mathbb{R}^{{n_x}\times {n_x} }$,  $B\in\mathbb{R}^{{n_x}\times {n_u} }$, $C\in\mathbb{R}^{{n_x}\times {n_\lambda} }$,
$\boldsymbol{d}\in\mathbb{R}^{n_x} $, $D\in\mathbb{R}^{{n_\lambda} \times {n_x}  }$, $E\in\mathbb{R}^{{n_\lambda} \times n_u }$, $F\in\mathbb{R}^{{n_\lambda} \times {n_\lambda}  }$,  and $\boldsymbol{c}\in\mathbb{R}^{n_\lambda} $ are system  matrix/vector parameters. At $(\vec{x}_t, \vec{u}_t)$,    $\boldsymbol{\lambda}_t$ is solved from the LCP in  (\ref{equ.lcs.2}), written as
\begin{equation}\label{equ.lcp}
\boldsymbol{\lambda}_t\in\lcp(F, \vec{q}_t) \quad \text{with} \quad \vec{q}_t:=D\boldsymbol{x}_t  +E\boldsymbol{u}_t+\boldsymbol{c}.
\end{equation}
It is well-known that    $\lcp(F, \vec{q}_t)$ has a unique solution $\boldsymbol{\lambda}_t$ for every $\vec{q}_t$ if and only if $F$ is $P$-matrix \citep{cottle2009linear}. We will discuss this  in  the next section.

In this paper, we consider to learn a LCS  from a dataset  $\mathcal{D}=\{(\boldsymbol{x}_t^*,\boldsymbol{u}_t^*, \boldsymbol{x}_{t+1}^*)\}_{t=1}^N$. Specifically, we aim to find the system parameter
\begin{equation}
\boldsymbol{\theta}=\{A,B,C,\boldsymbol{d},D,E,F,\boldsymbol{c}\}
\end{equation}
by minimizing a  loss  
 $L(\boldsymbol{\theta},\mathcal{D})$. Thus, the problem of interest in this paper is to solve
\begin{equation}\label{equ.loss}
\min_{\boldsymbol{\theta}\in\boldsymbol{\Theta}} \quad L(\boldsymbol{\theta},\mathcal{D}) + {R} (\boldsymbol{\theta}).
\end{equation}
Here, $R(\boldsymbol{\theta})$ can be any regularization term imposed on $\boldsymbol{\theta}$ which will be discussed  later.

\section{LCP and Prediction-based Formulation} \label{section.basics}

This section will discuss the solution to  LCP in (\ref{equ.lcp}), and then  describe a prediction-based loss formulation $L(\boldsymbol{\theta},\mathcal{D})$ for (\ref{equ.loss}). To start, we make the following assumption on
  $F$ in (\ref{equ.lcp}).
\begin{assumption}\label{assumption1}
	 $F\in\mathbb{R}^{n_\lambda\times n_\lambda}$ satisfies $F+F\tran\succ0$.
\end{assumption}
The set of  $F$ satisfying Assumption \ref{assumption1} contains all positive matrices of  feasible dimension and  any asymmetric matrices with definite-positive symmetric part. In robotics applications, $F$ with Assumption \ref{assumption1} has been widely used in soft contact dynamics problems such as [TBD].  
Any  $F$ satisfying Assumption \ref{assumption1} can be shown to be a  $P$-matrix \citep{tsatsomeros2002generating}, thus leading to the existence and  uniqueness of $\vec{\lambda}_t$. In fact, under Assumption \ref{assumption1},  $\boldsymbol{\lambda}_t=\lcp(F, \vec{q}_t)$ can be solved by the following convex optimization due to the fact $\boldsymbol{\lambda}\tran F\boldsymbol{\lambda}=\frac{1}{2}\boldsymbol{\lambda}\tran(F+F\tran)\boldsymbol{\lambda}$,
\begin{equation}\label{equ.convexlcp}
\begin{aligned}
\boldsymbol{\lambda}_t=\arg\,\,&\min_{\boldsymbol{\lambda}} \,\,
\frac{1}{2}\boldsymbol{\lambda}\tran(F+F\tran)\boldsymbol{\lambda}+\boldsymbol{\lambda}\tran\vec{q}_t\qquad\text{s.t.} \quad F\boldsymbol{\lambda}+\vec{q}_t\geq \boldsymbol{0}, \quad \quad \boldsymbol{\lambda}\geq \boldsymbol{0},
\end{aligned}
\end{equation}
With the above assumption,  one  natural   loss  in (\ref{equ.loss})  can be
\begin{equation}\label{equ.prednet}
L^{\text{pred}}(\boldsymbol{\theta},\mathcal{D}){=}\,\,\,
\sum_{t=1}^{N}\,\,\,\frac{1}{2}
\norm{\boldsymbol{x}^{\boldsymbol{\theta}}_{t+1}-\boldsymbol{x}_{t+1}^*}^2 \quad \text{with} \quad
\begin{aligned}
&\boldsymbol{x}_{t+1}^{\boldsymbol{\theta}}=A\boldsymbol{x}_t^*+B\boldsymbol{u}_t^*+C\boldsymbol{\lambda}_t^*+\boldsymbol{d}, \\
&\boldsymbol{\lambda}_t^*=\lcp(F,D\boldsymbol{x}_t^*  +E\boldsymbol{u}_t^*+\vec{c}).
\end{aligned}
\end{equation}
Here, $\boldsymbol{x}_{t+1}^{\boldsymbol{\theta}}$ is the predicted next state, implicitly depending on  $\boldsymbol{\theta}$. We call (\ref{equ.prednet})  \emph{prediction-based loss}, as it evaluates the difference between the predicted $\boldsymbol{x}_{t+1}^{\boldsymbol{\theta}}$ and  observed $\boldsymbol{x}_{t+1}^*$. One can minimize  (\ref{equ.prednet}) via any gradient-based method   by differentiating through LCP \citep{de2018end}. This requires  differentiblity of a LCP,  given below.
\begin{lemma}\label{lemma.lcpdiff}
	With Assumption \ref{assumption1},  $\boldsymbol{\lambda}_t^*=\lcp(F,D\boldsymbol{x}_t^*  +E\boldsymbol{u}_t^*+\vec{c})$ is differentiable with respect to  $(F,D, E,\vec{c})$, if the following strict complementarity   holds at $\boldsymbol{\lambda}_t^*$:  $\boldsymbol{\lambda}_t^*[i]>0$ or  $(F\boldsymbol{\lambda}_t^*+D\boldsymbol{x}_t^*  +E\boldsymbol{u}_t^*+\vec{c})[i]>0$,  $\forall i=1,2,...,n_{\lambda}.$
\end{lemma}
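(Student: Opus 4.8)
The plan is to use the strict complementarity hypothesis to reduce the LCP, locally around $\boldsymbol{\lambda}_t^*$, to a system of linear equations whose solution is an explicit smooth function of the parameters, and then to invoke continuity of the P-matrix LCP solution map to argue that this local reduction is stable under small perturbations. First I would partition the index set $\{1,\dots,n_\lambda\}$ according to the active constraints at $\boldsymbol{\lambda}_t^*$. Writing $\boldsymbol{w}_t^*:=F\boldsymbol{\lambda}_t^*+\vec{q}_t\geq\boldsymbol{0}$, define the active set $\alpha=\{i:\boldsymbol{\lambda}_t^*[i]>0\}$ and the inactive set $\beta=\{i:\boldsymbol{w}_t^*[i]>0\}$. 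The strict complementarity assumption is precisely the statement that $\alpha$ and $\beta$ partition $\{1,\dots,n_\lambda\}$, with each index in exactly one: complementarity $\boldsymbol{\lambda}_t^*\perp\boldsymbol{w}_t^*$ forbids both entries being positive, while strict complementarity forbids both being zero. Hence on $\beta$ we have $\boldsymbol{\lambda}_t^*[\beta]=\boldsymbol{0}$, and on $\alpha$ we have $\boldsymbol{w}_t^*[\alpha]=\boldsymbol{0}$, i.e. $F_{\alpha\alpha}\boldsymbol{\lambda}_t^*[\alpha]+\vec{q}_t[\alpha]=\boldsymbol{0}$ after eliminating the zero block $\boldsymbol{\lambda}_t^*[\beta]$.

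Next I would establish invertibility of the principal submatrix $F_{\alpha\alpha}$. Under Assumption \ref{assumption1}, $F+F\tran\succ0$, and any principal submatrix of a positive-definite matrix is positive definite, so $F_{\alpha\alpha}+F_{\alpha\alpha}\tran\succ0$ and $F_{\alpha\alpha}$ is nonsingular. This yields the closed form $\boldsymbol{\lambda}_t^*[\alpha]=-F_{\alpha\alpha}^{-1}\vec{q}_t[\alpha]$ and $\boldsymbol{\lambda}_t^*[\beta]=\boldsymbol{0}$, in which $F_{\alpha\alpha}$ and $\vec{q}_t[\alpha]=(D\boldsymbol{x}_t^*+E\boldsymbol{u}_t^*+\vec{c})[\alpha]$ depend smoothly on $(F,D,E,\vec{c})$ (affinely, together with matrix inversion).

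The main obstacle, and the step that actually uses the $P$-matrix structure beyond mere invertibility of $F_{\alpha\alpha}$, is justifying that this explicit formula remains valid on a full neighborhood of the base parameters, i.e. that the active set $\alpha$ is locally constant. For this I would appeal to the fact that $F+F\tran\succ0$ is an open condition, preserved under small perturbations and guaranteeing $F$ is a $P$-matrix, so that the solution $\boldsymbol{\lambda}_t^*$ of $\lcp(F,\vec{q}_t)$ (and therefore $\boldsymbol{w}_t^*=F\boldsymbol{\lambda}_t^*+\vec{q}_t$) is Lipschitz continuous in $(F,\vec{q}_t)$, a classical property of $P$-matrix LCPs \citep{cottle2009linear}. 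Since $\boldsymbol{\lambda}_t^*[\alpha]>\boldsymbol{0}$ and $\boldsymbol{w}_t^*[\beta]>\boldsymbol{0}$ hold strictly at the base point, continuity preserves both strict inequalities, and hence the complementary zeros on the opposite block, for all sufficiently small perturbations; thus the partition $(\alpha,\beta)$ does not change.

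Consequently the closed form holds on a neighborhood, and differentiability of $\boldsymbol{\lambda}_t^*$ with respect to $(F,D,E,\vec{c})$ follows immediately, with derivatives obtained by differentiating $-F_{\alpha\alpha}^{-1}\vec{q}_t[\alpha]$. Equivalently, one may package the two blocks $\boldsymbol{\lambda}_t^*[\beta]=\boldsymbol{0}$ and $F_{\alpha\alpha}\boldsymbol{\lambda}_t^*[\alpha]+\vec{q}_t[\alpha]=\boldsymbol{0}$ as a single smooth equation $G(\boldsymbol{\lambda}_t^*;F,D,E,\vec{c})=\boldsymbol{0}$ and apply the implicit function theorem, whose nonsingularity requirement is again exactly the invertibility of $F_{\alpha\alpha}$ established above. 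I expect the local-constancy-of-the-active-set argument to be where the real content lies; everything else is linear algebra and the chain rule.
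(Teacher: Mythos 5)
Your proof is correct, and it takes a genuinely different route from the paper's. The paper works with the single equation $\boldsymbol{g}(\boldsymbol{\lambda}_t, D,E,F,\boldsymbol{c})=\diag(\boldsymbol{\lambda}_t)\left(D\boldsymbol{x}_t^*+E\boldsymbol{u}_t^*+F\boldsymbol{\lambda}_t+\boldsymbol{c}\right)=\boldsymbol{0}$ on all of $\mathbb{R}^{n_\lambda}$: it first shows, by a contradiction argument resting on the reverse-sign property of P-matrices (Theorem 3.3.4 in \citep{cottle2009linear}), that the Jacobian $S_t=\diag(D\boldsymbol{x}_t^*+E\boldsymbol{u}_t^*+F\boldsymbol{\lambda}_t^*+\boldsymbol{c})+\diag(\boldsymbol{\lambda}_t^*)F$ is nonsingular under strict complementarity, and then applies the implicit function theorem; no index partitioning appears. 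You instead split indices into the strictly active/inactive sets $(\alpha,\beta)$, obtain the local closed form $\boldsymbol{\lambda}_t^*[\alpha]=-F_{\alpha\alpha}^{-1}\boldsymbol{q}_t[\alpha]$, $\boldsymbol{\lambda}_t^*[\beta]=\boldsymbol{0}$, and prove the active set is locally constant via Lipschitz continuity of the P-matrix solution map. Two contrasts are worth noting. First, your nonsingularity argument (principal submatrices of $F+F\tran\succ0$) is more elementary, but it is tied to Assumption \ref{assumption1}, whereas the paper's argument for $S_t$ goes through for an arbitrary P-matrix $F$. Second---and this is a point in your favor---your local-constancy step supplies something the paper's sketch quietly needs but never states: the equation $\boldsymbol{g}=\boldsymbol{0}$ has many solution branches (one per candidate active set), and the implicit function theorem alone only produces a smooth branch through $\boldsymbol{\lambda}_t^*$; identifying that branch with the actual LCP solution under perturbed parameters requires exactly the continuity of the solution map that you invoke. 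So your proof is, if anything, the more complete of the two, at the price of block-index bookkeeping, while the paper's is more compact and yields the derivative directly in the form $-S_t^{-1}\,\partial\boldsymbol{g}/\partial\boldsymbol{\theta}$ without any matrix partitioning.
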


A sketch of a  proof of Lemma \ref{lemma.lcpdiff} is given in Appendix \ref{appendix.proof.lemma1}. The above prediction-based loss in (\ref{equ.prednet})   will serve as a benchmark in the following method development.



\section{Proposed Method for Learning LCS}
This section will develop a new  method for learning LCS. As we will show this section and the experiments in next section, the proposed method  attains several  advantages over the prediction-based loss (\ref{equ.prednet}) both in theoretical property and implementation. 

\subsection{Violation-based Loss}
To start, we give the following lemma  stating an equivalence of a LCP. 

\begin{lemma}\label{lemma.cn.key}
	Given any $\boldsymbol{q}_t\in\mathbb{R}^{n_\lambda}$ and   $F$ satisfying Assumption \ref{assumption1}, solving $\vec{\lambda}_t=\lcp(F,\vec{q}_t)$  is the equivalent to solving the following strongly-convex quadratic program:
	\begin{equation}\label{equ.cn.keyequ}
	(\boldsymbol{\lambda}_t, \boldsymbol{\phi}_t)=\arg\min_{\boldsymbol{\lambda}\geq\boldsymbol{0}, \,\,\boldsymbol{\phi}\geq\boldsymbol{0}} \quad \boldsymbol{\lambda}\tran\boldsymbol{\phi} +
	\frac{1}{2\gamma}
	\norm{F\boldsymbol{\lambda}+\boldsymbol{q}_t-\boldsymbol{\phi}}^2,
	\end{equation}
	with any  constant $0<\gamma<\sigma_{\min}(F\tran{+}F)$ (  $\sigma_{\min}(\cdot)$ denotes  the smallest singular value). 
\end{lemma}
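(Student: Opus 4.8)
The plan is to prove the statement in two essentially independent parts: first that the program in (\ref{equ.cn.keyequ}) is strongly convex, so that it admits a unique minimizer, and second that the $\boldsymbol{\lambda}$-component of this minimizer is exactly the unique LCP solution. The guiding observation is that introducing $\boldsymbol{\phi}=F\boldsymbol{\lambda}+\boldsymbol{q}_t$ rewrites the LCP conditions $\boldsymbol{0}\leq\boldsymbol{\lambda}\perp F\boldsymbol{\lambda}+\boldsymbol{q}_t\geq\boldsymbol{0}$ as the four conditions $\boldsymbol{\lambda}\geq\boldsymbol{0}$, $\boldsymbol{\phi}\geq\boldsymbol{0}$, $\boldsymbol{\lambda}\tran\boldsymbol{\phi}=0$, and $\boldsymbol{\phi}=F\boldsymbol{\lambda}+\boldsymbol{q}_t$. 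The program (\ref{equ.cn.keyequ}) keeps the two sign constraints, promotes complementarity to the objective term $\boldsymbol{\lambda}\tran\boldsymbol{\phi}$, and relaxes the equality into the penalty $\tfrac{1}{2\gamma}\norm{F\boldsymbol{\lambda}+\boldsymbol{q}_t-\boldsymbol{\phi}}^2$.

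For the equivalence I would argue through the optimal value. On the feasible set $\{\boldsymbol{\lambda}\geq\boldsymbol{0},\,\boldsymbol{\phi}\geq\boldsymbol{0}\}$ both terms of the objective are nonnegative: $\boldsymbol{\lambda}\tran\boldsymbol{\phi}\geq0$ since the vectors are entrywise nonnegative, and the squared norm is nonnegative, so the optimal value is bounded below by $0$. Since Assumption \ref{assumption1} makes $F$ a $P$-matrix, the LCP has a solution $\boldsymbol{\lambda}^*$; setting $\boldsymbol{\phi}^*=F\boldsymbol{\lambda}^*+\boldsymbol{q}_t$ produces a feasible pair at which both terms vanish, so the optimal value is exactly $0$ and $(\boldsymbol{\lambda}^*,\boldsymbol{\phi}^*)$ is a global minimizer. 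Conversely, any global minimizer attains value $0$, which forces both nonnegative terms to vanish, giving $\boldsymbol{\lambda}\tran\boldsymbol{\phi}=0$ and $\boldsymbol{\phi}=F\boldsymbol{\lambda}+\boldsymbol{q}_t$; together with the sign constraints these are precisely the LCP conditions, so the minimizer recovers the LCP solution.

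The substantive part is verifying strong convexity, which is what pins down the stated bound on $\gamma$. I would compute the Hessian of the objective in the stacked variable $(\boldsymbol{\lambda},\boldsymbol{\phi})$, namely
\begin{equation*}
H=\begin{pmatrix} \tfrac{1}{\gamma}F\tran F & I-\tfrac{1}{\gamma}F\tran \\ I-\tfrac{1}{\gamma}F & \tfrac{1}{\gamma}I \end{pmatrix},
\end{equation*}
and show $H\succ0$. Evaluating the quadratic form on $(\boldsymbol{a},\boldsymbol{b})$ and regrouping yields $\tfrac{1}{\gamma}\norm{F\boldsymbol{a}-\boldsymbol{b}}^2+2\boldsymbol{a}\tran\boldsymbol{b}$; then the invertible change of variables $\boldsymbol{s}=F\boldsymbol{a}-\boldsymbol{b}$, together with $2\boldsymbol{a}\tran F\boldsymbol{a}=\boldsymbol{a}\tran(F+F\tran)\boldsymbol{a}$, turns this into the form associated with $\begin{pmatrix} F+F\tran & -I \\ -I & \tfrac{1}{\gamma}I \end{pmatrix}$. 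By the Schur-complement criterion applied to the positive-definite block $\tfrac{1}{\gamma}I$, this matrix is positive definite exactly when $F+F\tran-\gamma I\succ0$, i.e. when $\gamma<\lambda_{\min}(F+F\tran)=\sigma_{\min}(F\tran+F)$, the last equality holding because $F+F\tran$ is symmetric positive definite by Assumption \ref{assumption1}. This matches the hypothesis on $\gamma$ and establishes strong convexity, hence uniqueness of the minimizer.

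I expect the Hessian and Schur-complement computation to be the main obstacle: correctly handling the cross terms and finding the change of variables that exposes $F+F\tran$ is what produces the precise threshold $\gamma<\sigma_{\min}(F\tran+F)$, rather than some looser sufficient bound. Once strong convexity is secured, it combines with the zero-optimal-value argument to close the proof, since the LCP and the program then have unique solutions that must coincide.
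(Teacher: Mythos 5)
Your proposal is correct and follows essentially the same route as the paper: identify the LCP solution (with $\boldsymbol{\phi}=F\boldsymbol{\lambda}+\boldsymbol{q}_t$) as a zero-value global minimizer by non-negativity of both objective terms, then establish uniqueness by showing the Hessian is positive definite, which via a Schur-complement computation reduces to exactly $F+F\tran-\gamma I_{n_\lambda}\succ 0$, i.e. $\gamma<\sigma_{\min}(F\tran+F)$. Your change of variables $\boldsymbol{s}=F\boldsymbol{a}-\boldsymbol{b}$ before taking the Schur complement, and your explicit converse (any minimizer attains value zero and hence satisfies the LCP conditions), are only cosmetic refinements of the paper's argument.
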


\begin{proof}
Define $f(\boldsymbol{\lambda},\boldsymbol{\phi})\defeq\boldsymbol{\lambda}\tran\boldsymbol{\phi} +
\frac{1}{\gamma}
\norm{F\boldsymbol{\lambda}+\boldsymbol{q}_t-\boldsymbol{\phi}}^2.$ By non-negativity, it is obvious that $\vec{\lambda}_t=\lcp(F,\vec{q}_t)$ and $\boldsymbol{\phi}_t=F\vec{\lambda}_t+\vec{q}_t$ is a global solution to $f(\boldsymbol{\lambda},\boldsymbol{\phi})$. Further, we need to show that $(\boldsymbol{\lambda}_t, \boldsymbol{\phi}_t)$ is a unique solution to  (\ref{equ.cn.keyequ}). To do so, we  compute the Hessian of  $f(\boldsymbol{\lambda},\boldsymbol{\phi})$,
\begin{equation}
\nabla^2 f=
\begin{bmatrix}
\frac{1}{\gamma} F\tran F & I_{n_\lambda}-\frac{1}{\gamma} F\tran\\[4pt]
I_{n_\lambda}-\frac{1}{\gamma}F &  \frac{1}{\gamma}  I_{n_\lambda}
\end{bmatrix}.
\end{equation}
Due to  Schur complement, $\nabla^2 f\succ0$  iff $\frac{1}{\gamma}I_{n_\lambda}\succ  0$ and
 $\frac{1}{\gamma} F\tran F-(I_{n_\lambda}-\frac{1}{\gamma} F\tran)
(\frac{1}{\gamma}I_{n_\lambda})^{-1}(I_{n_\lambda}-\frac{1}{\gamma}F)\succ 0$. Since $\gamma>0$,  and we only need to show
$
\small
\frac{1}{\gamma} F\tran F-(I_{n_\lambda}-\frac{1}{\gamma} F\tran)
(\frac{1}{\gamma}I_{n_\lambda})^{-1}(I_{n_\lambda}-\frac{1}{\gamma}F)=F\tran+F-\gamma I_{n_\lambda}
\succ 0.
$
This is true because $\gamma{<}\sigma_{\min}(F\tran{+}F)$. 
This completes the proof. 
\end{proof}

In  (\ref{equ.cn.keyequ}), we have introduced a proxy variable $\boldsymbol{\phi}\geq \boldsymbol{0}$ to represent LCP constraint $F\boldsymbol{\lambda}+\vec{q}_t\geq \boldsymbol{0}$. Compared to other  equivalences of LCP, such as (\ref{equ.convexlcp}),  we emphasize the following benefits of (\ref{equ.cn.keyequ}) with the introduced proxy variable $\boldsymbol{\phi}$. First,  (\ref{equ.cn.keyequ}) now  only has  box constraints which are \emph{independent} from  $\vec{\theta}$; this will facilitate the  learning  process  because one does not need to explicitly track the  active and inactive constraints and differentiate through constraints (which usually leads to numerical difficulty as shown in \citep{jin2021safe}). Second, compared to (\ref{equ.convexlcp}),  (\ref{equ.cn.keyequ}) turns  hard constraint   $F\vec{\lambda}+\vec{q}_t\geq 0$ into a soft penalty; this may smooth the landscape of the proposed loss, facilitating the optimization process over $\boldsymbol{\theta}$. With  Lemma \ref{lemma.cn.key}, we  are now in a position to propose the following loss function for learning LCS,

\begin{longfbox}[padding-top=3pt,margin-top=5pt, padding-bottom=0pt, margin-bottom=5pt]
\begin{subequations}\label{equ.contactnet}
	\begin{equation}\label{equ.contactnet.1}
	L_{\epsilon}(\boldsymbol{\theta},  \mathcal{D})=\,\,\,
	\sum\nolimits_{t=1}^{N}
	l_{\epsilon}( \boldsymbol{\theta},\boldsymbol{x}_{t}^*,  \boldsymbol{u}_{t}^*, \boldsymbol{x}_{t+1}^*) \quad \text{with}
	\end{equation}
	\begin{equation}\label{equ.contactnet.2}
	\begin{aligned}
	l_{\epsilon}(\boldsymbol{\theta}, \boldsymbol{x}_{t}^*, \boldsymbol{u}_{t}^*, \boldsymbol{x}_{t+1}^*)=&\min_{\boldsymbol{\lambda}_t\geq\boldsymbol{0},\,\, \boldsymbol{\phi}_t\geq\boldsymbol{0}} \,\, \frac{1}{2}\norm{A\boldsymbol{x}_t^*+B\boldsymbol{u}_t^*+
		C\boldsymbol{\lambda}_t+\boldsymbol{d}-\boldsymbol{x}_{t+1}^{*}}^2+\\
	&\qquad \quad \qquad \frac{1}{\epsilon} \,\, 
	\left(
	\boldsymbol{\lambda}_t\tran\boldsymbol{\phi}_t
	+\frac{1}{2\gamma}
	\norm{D\boldsymbol{x}_t^*  +E\boldsymbol{u}_t^* + F\boldsymbol{\lambda}_t + \boldsymbol{c}-\boldsymbol{\phi}}^2
	\right),
	\end{aligned}
	\end{equation}
\end{subequations}
\end{longfbox}
with $\epsilon>0$. In $L_{\epsilon}(\boldsymbol{\theta},  \mathcal{D})$, the loss $l_{\epsilon}(\boldsymbol{\theta}, \boldsymbol{x}_{t}^*, \boldsymbol{u}_{t}^*, \boldsymbol{x}_{t+1}^*)$ on each data point $(\boldsymbol{x}_{t}^*, \boldsymbol{u}_{t}^*, \boldsymbol{x}_{t+1}^*)$ includes two parts: the violation of dynamics  (\ref{equ.lcs.1}) and the violation of the LCP, as stated in Lemma \ref{lemma.cn.key}.  We have introduced  parameter $\epsilon>0$ to control the weight of penalties on the two violations.  (\ref{equ.contactnet}) is to minimize data's violation to both  dynamics  (\ref{equ.lcs.1}) and  complementarity constraints (\ref{equ.lcs.2}), thus we name it \emph{violation-based loss}. In what follows, we will show that   the violation-based loss attains some good properties both for theoretical analysis and algorithmic implementation,  in comparison with  the prediction-based loss (\ref{equ.prednet}).

\subsection{Properties of Violation-based Loss}\label{section.property}

The first lemma shows that the violation-based loss (\ref{equ.contactnet}) is a strongly-convex quadratic program w.r.t. $(\boldsymbol{\lambda}, \boldsymbol{\phi})$ and allows  much easier computation of the gradient w.r.t. $\boldsymbol{\theta}$.

\begin{lemma}\label{lemma.contactnet}
	Given  $F$ satisfying Assumption \ref{assumption1} and any constant $0<\gamma< \sigma_{\min}(F\tran+F)$,
	\begin{itemize}
		\item[(a)] (\ref{equ.contactnet.2})  is strongly-convex quadratic program with respect to $(\vec{\lambda}_t,\vec{\phi}_t)$ for any $\epsilon>0$.
		\item[(b)] Let   $(\boldsymbol{\lambda}_t^{\epsilon,\boldsymbol{\theta}}, \boldsymbol{\phi}_t^{\epsilon,\boldsymbol{\theta}})$ be the solution to (\ref{equ.contactnet.2}). $L_{\epsilon}(\boldsymbol{\theta},\mathcal{D})$ is differentiable with respect to~$\boldsymbol{\theta}$  if the   strict complementarity holds for both $\boldsymbol{\lambda}_t{\geq}\boldsymbol{0}$ and $\boldsymbol{\phi}_t{\geq}\boldsymbol{0}$ at $(\boldsymbol{\lambda}_t^{\epsilon,\boldsymbol{\theta}}, \boldsymbol{\phi}_t^{\epsilon,\boldsymbol{\theta}})$, $t=1,2,...,N$. The gradient  is given by
		\begin{equation}\label{equ.diffvioloss}
		\small
		\begin{aligned}
		&\nabla_{A}L_{\epsilon}=\sum_{t=1}^{N}\boldsymbol{e}_t^{\emph{dyn}}{\boldsymbol{x}_t^*}\tran,
		\,\,
		\nabla_{B}L_{\epsilon}=\sum_{t=1}^{N}\boldsymbol{e}_t^{\emph{dyn}}{\boldsymbol{u}_t^*}\tran,
		\,\, 
		\nabla_{C}L_{\epsilon}=\sum_{t=1}^{N}\boldsymbol{e}_t^{\emph{dyn}}{\boldsymbol{\lambda}_t^{*,\boldsymbol{\theta}}}\tran,
		\,\, 
		\nabla_{\boldsymbol{d}}L_{\epsilon}=\sum_{t=1}^{N}\boldsymbol{e}_t^{\emph{dyn}},
		\\
		&
		\nabla_{D}L_{\epsilon}=\sum_{t=1}^{N}\boldsymbol{e}_t^{\emph{lcp}}{\boldsymbol{x}_t^*}\tran,
		\,\
		\nabla_{E}L_{\epsilon}=\sum_{t=1}^{N}\boldsymbol{e}_t^{\emph{lcp}}{\boldsymbol{u}_t^*}\tran, \,\,
		\nabla_{F}L_{\epsilon}=\sum_{t=1}^{N}\boldsymbol{e}_t^{\emph{lcp}}{\boldsymbol{\lambda}_t^{\epsilon,\boldsymbol{\theta}}}\tran,
		\quad
		\nabla_{\boldsymbol{c}}L_{\epsilon}=\sum_{t=1}^{N}\boldsymbol{e}_t^{\emph{lcp}},
		\end{aligned}
		\end{equation}
		with $\boldsymbol{e}_t^{\emph{dyn}}{:=}A\boldsymbol{x}_t^*{+}B\boldsymbol{u}_t^*{+}
		C\boldsymbol{\lambda}_t^{\epsilon,\boldsymbol{\theta}}{+}\boldsymbol{d}{-}\boldsymbol{x}_{t+1}^{*}$ and 
		$\boldsymbol{e}_t^{\emph{lcp}}{:=}\frac{1}{\epsilon\gamma}(D\boldsymbol{x}_t^*  {+}E\boldsymbol{u}_t^* {+} F\boldsymbol{\lambda}_t^{\epsilon,\boldsymbol{\theta}}{ +} \boldsymbol{c}{-}\boldsymbol{\phi}_t^{\epsilon,\boldsymbol{\theta}})$.

	\end{itemize}

\end{lemma}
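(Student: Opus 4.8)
The plan is to prove the two parts in sequence, with part (a) feeding directly into part (b). For part (a), I would compute the Hessian of the objective in (\ref{equ.contactnet.2}) with respect to the stacked variable $(\boldsymbol{\lambda}_t, \boldsymbol{\phi}_t)$. The objective splits into the dynamics term $\frac{1}{2}\norm{A\boldsymbol{x}_t^* + B\boldsymbol{u}_t^* + C\boldsymbol{\lambda}_t + \boldsymbol{d} - \boldsymbol{x}_{t+1}^*}^2$ and the $\frac{1}{\epsilon}$-weighted LCP-violation term. The dynamics term depends only on $\boldsymbol{\lambda}_t$ and contributes the constant block $\begin{bmatrix} C\tran C & 0 \\ 0 & 0\end{bmatrix}\succeq 0$. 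The violation term is exactly $\frac{1}{\epsilon}$ times the objective $f$ of Lemma \ref{lemma.cn.key}, whose Hessian was already shown there to be positive definite under $0<\gamma<\sigma_{\min}(F\tran+F)$. Since a positive-semidefinite matrix plus a positive-definite one (scaled by $\frac{1}{\epsilon}>0$) is positive definite, the total Hessian is positive definite. Combined with the convex box constraints $\boldsymbol{\lambda}_t\geq\boldsymbol{0},\ \boldsymbol{\phi}_t\geq\boldsymbol{0}$, this establishes that (\ref{equ.contactnet.2}) is a strongly-convex QP with a unique minimizer $(\boldsymbol{\lambda}_t^{\epsilon,\boldsymbol{\theta}}, \boldsymbol{\phi}_t^{\epsilon,\boldsymbol{\theta}})$.

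For part (b) I would run an envelope-theorem (Danskin-type) argument, exploiting the structural fact that the box constraints are \emph{independent of} $\boldsymbol{\theta}$. Starting from the uniqueness in (a), I would write the KKT stationarity conditions of the inner QP. Under the stated strict complementarity, the active set of the box constraints is locally constant, so applying the implicit function theorem to the reduced stationarity system — whose Jacobian is the positive-definite reduced Hessian from (a), hence invertible — shows that $(\boldsymbol{\lambda}_t^{\epsilon,\boldsymbol{\theta}}, \boldsymbol{\phi}_t^{\epsilon,\boldsymbol{\theta}})$ is locally $C^1$ in $\boldsymbol{\theta}$; this step parallels the sketch behind Lemma \ref{lemma.lcpdiff}. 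Differentiability of $L_\epsilon$ then follows from the chain rule applied to the optimal value $l_\epsilon(\boldsymbol{\theta})=\text{(objective)}(\boldsymbol{\lambda}_t^{\epsilon,\boldsymbol{\theta}},\boldsymbol{\phi}_t^{\epsilon,\boldsymbol{\theta}},\boldsymbol{\theta})$, and summing over $t$.

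The crux is that the implicit-derivative contribution vanishes. Decomposing $\nabla_{\boldsymbol{\theta}} l_\epsilon$ into the explicit partial in $\boldsymbol{\theta}$ plus the term coupling $\nabla_{(\boldsymbol{\lambda},\boldsymbol{\phi})}(\text{objective})$ with the sensitivity of the minimizer, I would argue component-wise that the coupling term is zero: on inactive components stationarity forces the objective gradient to vanish, while on active components strict complementarity pins the variable at $\boldsymbol{0}$ throughout a neighborhood, so its $\boldsymbol{\theta}$-derivative vanishes. Hence only the explicit partial survives, and I would evaluate it by matrix calculus with $(\boldsymbol{\lambda}_t^{\epsilon,\boldsymbol{\theta}},\boldsymbol{\phi}_t^{\epsilon,\boldsymbol{\theta}})$ held fixed. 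Introducing the dynamics residual $\boldsymbol{e}_t^{\text{dyn}}$ and the scaled LCP residual $\boldsymbol{e}_t^{\text{lcp}}$, differentiating $\frac{1}{2}\norm{\cdot}^2$ with respect to $A,B,C,\boldsymbol{d}$ gives the outer products $\boldsymbol{e}_t^{\text{dyn}}{\boldsymbol{x}_t^*}\tran$, $\boldsymbol{e}_t^{\text{dyn}}{\boldsymbol{u}_t^*}\tran$, $\boldsymbol{e}_t^{\text{dyn}}{\boldsymbol{\lambda}_t^{\epsilon,\boldsymbol{\theta}}}\tran$, $\boldsymbol{e}_t^{\text{dyn}}$, and differentiating the $\frac{1}{2\gamma\epsilon}\norm{\cdot}^2$ term with respect to $D,E,F,\boldsymbol{c}$ gives the corresponding products built from $\boldsymbol{e}_t^{\text{lcp}}$, yielding exactly (\ref{equ.diffvioloss}).

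I expect the main obstacle to be the rigorous justification of this envelope step — namely verifying via the implicit function theorem that the minimizer is differentiable and that both families of cross terms genuinely vanish under strict complementarity — since the explicit matrix differentiation is routine once that justification is in place.
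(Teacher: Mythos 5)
Your proposal is correct and follows essentially the same route as the paper: part (a) by positive-definiteness of the inner Hessian, and part (b) by combining strict complementarity with implicit-function/sensitivity arguments for differentiability of the minimizer, then an envelope-theorem step plus routine matrix calculus to obtain (\ref{equ.diffvioloss}). The only difference is that where the paper cites Fiacco's sensitivity theorem and the envelope theorem, you prove those steps directly (the locally constant active set, and the component-wise vanishing of the coupling term), and your part (a) neatly reuses the Hessian computation of Lemma \ref{lemma.cn.key}, which the paper leaves implicit.
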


\begin{proof}
	Claim (a) in Lemma \ref{lemma.contactnet} can be easily  proved by verifying that the Hessian  of the objective function in (\ref{equ.contactnet.2}) is positive definite.

	 In Claim (b), the differentiability of  $L_{\epsilon}(\boldsymbol{\theta},\mathcal{D})$ depends on the differentiability of $(\boldsymbol{\lambda}_t^{\epsilon,\boldsymbol{\theta}}, \boldsymbol{\phi}_t^{\epsilon,\boldsymbol{\theta}})$  with respect to~$\boldsymbol{\theta}$, $t=1,2,...,N$. In fact, $(\boldsymbol{\lambda}_t^{\epsilon,\boldsymbol{\theta}}, \boldsymbol{\phi}_t^{\epsilon,\boldsymbol{\theta}})$ is differentiable with respect to $\boldsymbol{\theta}$ if the strict complementarity condition holds for constrained optimization in (\ref{equ.contactnet.2}). This is a direct result from the well-known sensitivity analysis theory  (see Theorem 2.1 in \citep{fiacco1976sensitivity}). The gradient of $L_{\epsilon}(\boldsymbol{\theta},\mathcal{D})$ can be obtained directly applying the  envelope theorem \citep{afriat1971theory}. For example,
	the gradient of $l_{\epsilon}(\boldsymbol{\theta}, \boldsymbol{x}_{t}^*, \boldsymbol{u}_{t}^*, \boldsymbol{x}_{t+1}^*)$ with respect to matrix $A$ is 
	 \begin{equation*}
	 \begin{aligned}
	 \nabla_{ \vect{(A)}}l_\epsilon=
	 \left(\frac{d l_\epsilon}{d \vect{(A)}}\right)\tran=
	 \left(
	 (\boldsymbol{e}_t^{\text{dyn}})\tran\left(
	 {\boldsymbol{x}_t^*}\tran\otimes I_{n_x}
	 \right)
	 \right)\tran=
	 \left(
	 \boldsymbol{x}_t^*\otimes I_{n_x}
	 \right)\boldsymbol{e}_t^{\text{dyn}}= \vect(\boldsymbol{e}_t^{\text{dyn}}{\boldsymbol{x}_t^*}\tran).
	 \end{aligned}
	 \end{equation*}
	 Writing the above into the matrix form leads to $	 \nabla_{A}L_{\epsilon}=\sum_{t=1}^{N}\boldsymbol{e}_t^{\text{dyn}}(\boldsymbol{x}_t^*)\tran.$
	 Similar derivations also apply to $\nabla_{B}L_{\epsilon}$,
	 $\nabla_{B}L_{\epsilon}$,
	 $\nabla_{C}L_{\epsilon}$,
	 $\nabla_{D}L_{\epsilon}$,
	 $\nabla_{E}L_{\epsilon}$, and 
	 $\nabla_{F}L_{\epsilon}$.
	 This completes the proof. 
\end{proof}

In addition to the strongly-convex quadratic problem with bound constraints in (\ref{equ.contactnet.2}), Lemma \ref{lemma.contactnet} state that $L_{\epsilon}(\boldsymbol{\theta},\mathcal{D})$ allows for much simpler differentiation, as stated in claim (b). Note that  differentiation of $L_{\epsilon}(\boldsymbol{\theta},\mathcal{D})$  in (\ref{equ.diffvioloss}) does not involve any matrix inverse. This is in stark contrast with the prediction-based loss (\ref{equ.prednet}), whose differentiation  \citep{de2018end} is based on the implicit function theorem \citep{rudin1976principles}  and requires the inverse of Jacobian matrix of KKT equations (which is  computationally expensive).

Another implication of Lemma  \ref{lemma.contactnet} is that the Lipschitz constant of $L_{\epsilon}(\boldsymbol{\theta},\mathcal{D})$   with respect to the LCP matrices $(D, E, F, \vec{c})$ can be controlled by the choice of $\epsilon$. Specifically, the second line of (\ref{equ.diffvioloss}) shows that one can always choose a large $\epsilon$ to produce a small  Lipschitz constant of the loss landscape with respect to  $(D, E, F, \vec{c})$. This property can  facilitate the learning process by controlling the smoothness of the loss landscape,   and also helpful in the generalization of  learned results as analyzed in a concurrent work \citep{bianchini2021generalization}. However, we also need to note that the large choice of $\epsilon$ could lead to the bias learning results, as shown and analyzed in the later simulation examples.

We further have the following result, which states the  second-order derivative of  $L_{\epsilon}(\boldsymbol{\theta},  \mathcal{D})$. 

\begin{lemma}\label{lemma.cn.hessian}
Given  $F$ satisfying Assumption \ref{assumption1} and any constant $0<\gamma< \sigma_{\min}(F\tran+F)$, suppose that the differentiability  in Lemma \ref{lemma.contactnet} holds. Then,
\begin{equation}\label{equ.2nddiff}
\small
\nabla_{\boldsymbol{\theta}}^2\, {L}_{\epsilon}=\sum\limits_{t=1}^{N}\left(
\frac{\partial^2 L_{\epsilon}}{\partial \boldsymbol{\theta} \partial \boldsymbol{\theta}}-
\frac{\partial^2 L_{\epsilon}}{\partial \boldsymbol{\theta} \partial \boldsymbol{z}_t^{\epsilon}}
\left(
\diag\Big(\frac{\partial L_{\epsilon}}{\partial \boldsymbol{z}_t^{\epsilon}}\Big)+\diag(\boldsymbol{z}_t^{\epsilon})		\frac{\partial^2 L_{\epsilon}}{\partial \boldsymbol{z}_t^{\epsilon}\partial \boldsymbol{z}_t^{\epsilon} }
\right)^{{-}1}\diag(\boldsymbol{z}_t)	
\frac{\partial^2 L_{\epsilon}}{\partial \boldsymbol{z}_t^{\epsilon}\partial \boldsymbol{\theta} }\right),
\end{equation}
	with $\boldsymbol{z}_t^{\epsilon}=(\boldsymbol{\lambda}_t^{\epsilon,\boldsymbol{\theta}}, \boldsymbol{\phi}_t^{\epsilon,\boldsymbol{\theta}})$ being the solution to  (\ref{equ.contactnet.2}).
\end{lemma}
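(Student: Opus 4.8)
The plan is to recognize that each summand $l_\epsilon$ in (\ref{equ.contactnet.1}) is the optimal value of a parametric strongly-convex program whose feasible set $\{\boldsymbol{\lambda}_t\geq\boldsymbol{0},\,\boldsymbol{\phi}_t\geq\boldsymbol{0}\}$ does \emph{not} depend on $\boldsymbol{\theta}$, and to reduce the computation of $\nabla^2_{\boldsymbol{\theta}}L_\epsilon$ to the sensitivity $d\boldsymbol{z}_t^\epsilon/d\boldsymbol{\theta}$ of the minimizer. Since $L_\epsilon=\sum_t l_\epsilon$, it suffices to work with a single summand; abusing notation as in the statement, I write the objective inside the $\min$ of (\ref{equ.contactnet.2}) as a function of $(\boldsymbol{\theta},\boldsymbol{z}_t)$ with $\boldsymbol{z}_t=(\boldsymbol{\lambda}_t,\boldsymbol{\phi}_t)$, so that $\partial L_\epsilon/\partial\boldsymbol{z}_t^\epsilon$, $\partial^2 L_\epsilon/\partial\boldsymbol{z}_t^\epsilon\partial\boldsymbol{z}_t^\epsilon$, etc.\ denote its partials evaluated at the optimizer. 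By Lemma \ref{lemma.contactnet}(a) this objective is strongly convex in $\boldsymbol{z}_t$, hence $\partial^2 L_\epsilon/\partial\boldsymbol{z}_t^\epsilon\partial\boldsymbol{z}_t^\epsilon\succ0$. First, exactly as in the proof of Lemma \ref{lemma.contactnet}(b), the envelope theorem gives the total first derivative $dL_\epsilon/d\boldsymbol{\theta}=\partial L_\epsilon/\partial\boldsymbol{\theta}$ evaluated at $\boldsymbol{z}_t^\epsilon(\boldsymbol{\theta})$; differentiating once more by the chain rule yields $\nabla^2_{\boldsymbol{\theta}}L_\epsilon=\sum_t\big(\partial^2 L_\epsilon/\partial\boldsymbol{\theta}\partial\boldsymbol{\theta}+(\partial^2 L_\epsilon/\partial\boldsymbol{\theta}\partial\boldsymbol{z}_t^\epsilon)\,(d\boldsymbol{z}_t^\epsilon/d\boldsymbol{\theta})\big)$, so the whole lemma reduces to showing that $d\boldsymbol{z}_t^\epsilon/d\boldsymbol{\theta}$ equals the negative of the matrix product appearing in (\ref{equ.2nddiff}).

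Second, I would obtain $d\boldsymbol{z}_t^\epsilon/d\boldsymbol{\theta}$ by implicit differentiation of the KKT system of the box-constrained problem (\ref{equ.contactnet.2}). Writing $\boldsymbol{\mu}_t\geq\boldsymbol{0}$ for the multiplier of $\boldsymbol{z}_t\geq\boldsymbol{0}$, stationarity reads $\partial L_\epsilon/\partial\boldsymbol{z}_t=\boldsymbol{\mu}_t$ and complementary slackness reads $z_{t,i}\mu_{t,i}=0$; eliminating $\boldsymbol{\mu}_t$ collapses these conditions into the single smooth vector equation $\diag(\boldsymbol{z}_t)\,\partial L_\epsilon/\partial\boldsymbol{z}_t=\boldsymbol{0}$, which holds identically along $\boldsymbol{z}_t^\epsilon(\boldsymbol{\theta})$. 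Differentiating this identity in $\boldsymbol{\theta}$ via the product rule gives $\big(\diag(\partial L_\epsilon/\partial\boldsymbol{z}_t^\epsilon)+\diag(\boldsymbol{z}_t^\epsilon)\,\partial^2 L_\epsilon/\partial\boldsymbol{z}_t^\epsilon\partial\boldsymbol{z}_t^\epsilon\big)\,(d\boldsymbol{z}_t^\epsilon/d\boldsymbol{\theta})+\diag(\boldsymbol{z}_t^\epsilon)\,\partial^2 L_\epsilon/\partial\boldsymbol{z}_t^\epsilon\partial\boldsymbol{\theta}=\boldsymbol{0}$, which I would solve for $d\boldsymbol{z}_t^\epsilon/d\boldsymbol{\theta}$ and substitute into the expression from the first step to recover (\ref{equ.2nddiff}) verbatim.

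The main obstacle is justifying that the matrix $M\defeq\diag(\partial L_\epsilon/\partial\boldsymbol{z}_t^\epsilon)+\diag(\boldsymbol{z}_t^\epsilon)\,\partial^2 L_\epsilon/\partial\boldsymbol{z}_t^\epsilon\partial\boldsymbol{z}_t^\epsilon$ is invertible, which is exactly where the strict-complementarity hypothesis of Lemma \ref{lemma.contactnet} enters and legitimizes the implicit differentiation. I would handle this by partitioning the indices of $\boldsymbol{z}_t$ into the active set $\mathcal{A}=\{i:z_{t,i}^\epsilon=0\}$ and inactive set $\mathcal{I}=\{i:z_{t,i}^\epsilon>0\}$. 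On $\mathcal{A}$ strict complementarity forces $(\partial L_\epsilon/\partial\boldsymbol{z}_t^\epsilon)_i=\mu_{t,i}>0$ while $\diag(\boldsymbol{z}_t^\epsilon)$ annihilates the Hessian contribution, so those rows reduce to positive multiples of standard basis vectors; on $\mathcal{I}$ stationarity forces $(\partial L_\epsilon/\partial\boldsymbol{z}_t^\epsilon)_i=0$, so those rows equal $z_{t,i}^\epsilon$ times the corresponding Hessian row. Reordering $\mathcal{A}$ before $\mathcal{I}$ therefore makes $M$ block lower-triangular with diagonal blocks $\diag\big((\partial L_\epsilon/\partial\boldsymbol{z}_t^\epsilon)_{\mathcal{A}}\big)$ and $\diag(\boldsymbol{z}_{t,\mathcal{I}}^\epsilon)\,(\partial^2 L_\epsilon/\partial\boldsymbol{z}_t^\epsilon\partial\boldsymbol{z}_t^\epsilon)_{\mathcal{I}\mathcal{I}}$; the first has strictly positive diagonal by strict complementarity, and the second is a product of a positive diagonal matrix with a principal submatrix of the positive-definite Hessian of Lemma \ref{lemma.contactnet}(a), hence nonsingular. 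Thus $\det M\neq0$, the implicit function theorem applies, and the sensitivity formula, and with it (\ref{equ.2nddiff}), follows after summation over $t$. I would finally note that the factor written $\diag(\boldsymbol{z}_t)$ in (\ref{equ.2nddiff}) is understood as $\diag(\boldsymbol{z}_t^\epsilon)$, consistent with the derivation above.
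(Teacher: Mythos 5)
Your proof is correct, and its skeleton coincides with the paper's: apply the envelope theorem to get $\nabla_{\boldsymbol{\theta}}L_\epsilon=\sum_t\partial L_\epsilon/\partial\boldsymbol{\theta}$, differentiate once more by the chain rule to pick up the term $\frac{\partial^2 L_\epsilon}{\partial\boldsymbol{\theta}\partial\boldsymbol{z}_t^\epsilon}\frac{d\boldsymbol{z}_t^\epsilon}{d\boldsymbol{\theta}}$, obtain $d\boldsymbol{z}_t^\epsilon/d\boldsymbol{\theta}$ by implicitly differentiating the complementarity form of the KKT conditions $\diag(\boldsymbol{z}_t^\epsilon)\,\partial L_\epsilon/\partial\boldsymbol{z}_t^\epsilon=\boldsymbol{0}$, and substitute (you also correctly flag that $\diag(\boldsymbol{z}_t)$ in (\ref{equ.2nddiff}) is a typo for $\diag(\boldsymbol{z}_t^\epsilon)$). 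Where you genuinely diverge is the key technical step, invertibility of $M=\diag(\partial L_\epsilon/\partial\boldsymbol{z}_t^\epsilon)+\diag(\boldsymbol{z}_t^\epsilon)\,\partial^2L_\epsilon/\partial\boldsymbol{z}_t^\epsilon\partial\boldsymbol{z}_t^\epsilon$. The paper views the KKT system as an LCP whose matrix (the Hessian) is positive definite, hence a P-matrix, and invokes the contradiction argument from its Appendix proof of Lemma \ref{lemma.lcpdiff}, which rests on the reverse-sign property of P-matrices. You instead partition indices into active and inactive sets under strict complementarity, observe that after reordering $M$ is block lower-triangular with diagonal blocks $\diag\big((\partial L_\epsilon/\partial\boldsymbol{z}_t^\epsilon)_{\mathcal{A}}\big)$ (positive diagonal) and $\diag(\boldsymbol{z}^\epsilon_{t,\mathcal{I}})\,H_{\mathcal{I}\mathcal{I}}$ (positive diagonal times a principal submatrix of a positive-definite Hessian), and conclude $\det M\neq 0$. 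Your route is more elementary and self-contained — it needs only basic linear algebra and makes transparent exactly where strict complementarity and strong convexity each enter — whereas the paper's route reuses machinery already developed for Lemma \ref{lemma.lcpdiff} and would still apply if the Hessian were merely a (possibly nonsymmetric) P-matrix rather than symmetric positive definite; that extra generality is not needed here, so your argument is arguably the cleaner complete proof of this particular lemma.
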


\begin{proof}
	To prove Lemma \ref{lemma.cn.hessian},  we need first to show   $	\left(
	\diag\Big(\frac{\partial L_{\epsilon}^{\text{}}}{\partial \boldsymbol{z}_t^{\epsilon}}\Big)+\diag(\boldsymbol{z}_t^{\epsilon})		\frac{\partial^2 L_{\epsilon}^{\text{}}}{\partial \boldsymbol{z}_t^{\epsilon}\partial \boldsymbol{z}_t^{\epsilon} }
	\right)$ is invertible. We only provide the sketch of the proof due to page limits. First, the KKT conditions at  $\boldsymbol{z}_t^{\epsilon}=(\boldsymbol{\lambda}_t^{\epsilon,\boldsymbol{\theta}}, \boldsymbol{\phi}_t^{\epsilon,\boldsymbol{\theta}})$ can be written as the following LCP
	\begin{equation}\label{KKT}
		\boldsymbol{0}\leq \small\left(\frac{\partial L_{\epsilon}^{\text{}}}{\partial \boldsymbol{z}_t^{\epsilon}}\right)^{\prime}\perp\boldsymbol{z}_t^{\epsilon}\geq \boldsymbol{0}.
	\end{equation}
	The    strict complementarity  (differentiblity of $L_{\epsilon}^{\text{}}$) stated in claim (b) of Lemma \ref{lemma.contactnet} is equivalent to say the above LCP in (\ref{KKT}) is strictly complementarity. By claim (a) in  Lemma \ref{lemma.contactnet}, we have known  $	\frac{\partial^2 L_{\epsilon}^{\text{}}}{\partial \boldsymbol{z}_t^{\epsilon}\partial \boldsymbol{z}_t^{\epsilon} }\succ 0$, which is a P-matrix. Following the same proof in Appendix  \ref{appendix.proof.lemma1}, one can  show that  $	\left(
	\diag\Big(\frac{\partial L_{\epsilon}^{\text{}}}{\partial \boldsymbol{z}_t^{\epsilon}}\Big)+\diag(\boldsymbol{z}_t^{\epsilon})		\frac{\partial^2 L_{\epsilon}^{\text{}}}{\partial \boldsymbol{z}_t^{\epsilon}\partial \boldsymbol{z}_t^{\epsilon} }
	\right)$ is invertible.

	Now we prove  Lemma \ref{lemma.cn.hessian}. By applying  envelop theorem  \citep{afriat1971theory} to   $	l_{\epsilon}(\boldsymbol{\theta}, \boldsymbol{x}_{t}^*, \boldsymbol{u}_{t}^*, \boldsymbol{x}_{t+1}^*)$ in (\ref{equ.contactnet.2}), one can write $	\nabla_{ \boldsymbol{\theta}}L_\epsilon^{\text{}}=\left(
	\frac{\partial
		L_\epsilon^{\text{}} }{\partial \boldsymbol{\theta}}\right)\tran.$
	When taking the second-order derivative, one has
	\begin{equation}\label{app.proof6.equ5}
	\small
	\nabla_{ \boldsymbol{\theta}}^2L_\epsilon^{\text{}}=\sum_{t=1}^{N}\left(
	\frac{\partial^2
		L_\epsilon^{\text{}} }{\partial \boldsymbol{\theta}\partial \boldsymbol{\theta}}+\frac{\partial^2
		L_\epsilon^{\text{}} }{\partial \boldsymbol{\theta}\partial \boldsymbol{z}_t^{\epsilon}}\frac{d \boldsymbol{z}_t^{\epsilon}}{d \boldsymbol{\theta}}
	\right).
	\end{equation}
	Here, by differentiating through the LCP    in (\ref{KKT}), one can obtain 
	\begin{equation}
	\small
	\frac{\partial \boldsymbol{z}_t^{\epsilon}}{\partial \boldsymbol{\theta}}=-	\left(
	\diag\Big(\frac{\partial L_{\epsilon}^{\text{}}}{\partial \boldsymbol{z}_t^{\epsilon}}\Big)+\diag(\boldsymbol{z}_t^{\epsilon})		\frac{\partial^2 L_{\epsilon}^{\text{}}}{\partial \boldsymbol{z}_t^{\epsilon}\partial \boldsymbol{z}_t^{\epsilon} }
	\right)^{-1}\diag(\boldsymbol{z}_t^{\epsilon})	
	\frac{\partial^2 L_{\epsilon}^{\emph{}}}{\partial \boldsymbol{z}_t^{\epsilon}\partial \boldsymbol{\theta} }.
	\end{equation}
	Plugging the above to (\ref{app.proof6.equ5}) leads to (\ref{equ.2nddiff}).
\end{proof}

Lemma \ref{lemma.cn.hessian} states that first,  Hessian   of the violation-based loss  with respect to the system parameter $\boldsymbol{\theta}$ can also be analytically obtained. Such Hessian  is important both for algorithmic implementation and theoretical analysis. Arithmetically, the above Hessian can be used  to develop  second-order methods for optimizing (\ref{equ.loss}). Analytically, the Hessian  can be used to analyze the convexity of the problem. Specifically, if (\ref{equ.contactnet.2}) is convex jointly with respect to $(\boldsymbol{\lambda}_t, \boldsymbol{\phi}_t)$ and $\boldsymbol{\theta}$,  one can show that $L_{\epsilon}(\boldsymbol{\theta}, \mathcal{D})$ will be convex (also see Section 3.2.5 in \citep{boyd2004convex}). This holds for all other system matrices/vectors except matrices $C$ and $F$, which imposes the challenges for learning process. 

Finally, we give the following result showing the violation-based loss $L_{\epsilon}(\boldsymbol{\theta}, \mathcal{D})$ in (\ref{equ.contactnet}) can be controlled to approximate the prediction-based loss $L^{\text{pred}}(\boldsymbol{\theta},\mathcal{D})$ (\ref{equ.prednet})  in terms of both the  loss itself and its differentiability.
\begin{lemma} \label{lemma.approximation}
	Given  $F$ satisfying Assumption \ref{assumption1} and any constant $0<\gamma< \sigma_{\min}(F\tran+F)$, there exists   $\Delta>0$  such that for any $\epsilon\in(0,\Delta]$,
	\begin{itemize}
		\item[(a)]  $L_{\epsilon}(\boldsymbol{\theta},\mathcal{D})$ is differentiable (Lemma \ref{lemma.contactnet}) if    $L^{\emph{pred}}(\boldsymbol{\theta},\mathcal{D})$ is differentiable (Lemma \ref{lemma.lcpdiff}).
		\item[(b)]
$
		L_{\epsilon}(\boldsymbol{\theta},\mathcal{D})\rightarrow L^{\emph{pred}}(\boldsymbol{\theta},\mathcal{D})\quad \text{as}\quad \epsilon \rightarrow 0.
$
		
	\end{itemize}
\end{lemma}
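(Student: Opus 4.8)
The plan is to read (\ref{equ.contactnet.2}) as a quadratic-penalty reformulation and argue by a standard penalty-method squeeze. I write the per-sample objective as $g_t(\boldsymbol{\lambda},\boldsymbol{\phi})+\tfrac1\epsilon h_t(\boldsymbol{\lambda},\boldsymbol{\phi})$, where $g_t(\boldsymbol{\lambda},\boldsymbol{\phi})=\tfrac12\norm{A\boldsymbol{x}_t^*+B\boldsymbol{u}_t^*+C\boldsymbol{\lambda}+\boldsymbol{d}-\boldsymbol{x}_{t+1}^*}^2$ is the dynamics residual and $h_t(\boldsymbol{\lambda},\boldsymbol{\phi})=\boldsymbol{\lambda}\tran\boldsymbol{\phi}+\tfrac1{2\gamma}\norm{D\boldsymbol{x}_t^*+E\boldsymbol{u}_t^*+F\boldsymbol{\lambda}+\boldsymbol{c}-\boldsymbol{\phi}}^2$ is the LCP violation. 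By Lemma \ref{lemma.cn.key}, $h_t\geq 0$ is strongly convex on the box $\{\boldsymbol{\lambda},\boldsymbol{\phi}\geq\boldsymbol{0}\}$ and its unique minimizer there is the LCP solution $\boldsymbol{z}_t^*\defeq(\boldsymbol{\lambda}_t^*,\boldsymbol{\phi}_t^*)$ with $\boldsymbol{\phi}_t^*=F\boldsymbol{\lambda}_t^*+\boldsymbol{q}_t$, $\boldsymbol{q}_t\defeq D\boldsymbol{x}_t^*+E\boldsymbol{u}_t^*+\boldsymbol{c}$, attaining $h_t(\boldsymbol{z}_t^*)=0$. I let $\boldsymbol{z}_t^\epsilon=(\boldsymbol{\lambda}_t^{\epsilon,\boldsymbol{\theta}},\boldsymbol{\phi}_t^{\epsilon,\boldsymbol{\theta}})$ denote the penalized minimizer.

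For part (b) I first establish the sandwich $g_t(\boldsymbol{z}_t^\epsilon)\leq l_\epsilon\leq g_t(\boldsymbol{z}_t^*)$: the upper bound follows by plugging the feasible $\boldsymbol{z}_t^*$ (for which $h_t$ vanishes) into the minimization, and the lower bound since $h_t\geq 0$. Subtracting gives $\tfrac1\epsilon h_t(\boldsymbol{z}_t^\epsilon)\leq g_t(\boldsymbol{z}_t^*)-g_t(\boldsymbol{z}_t^\epsilon)\leq g_t(\boldsymbol{z}_t^*)$, hence $h_t(\boldsymbol{z}_t^\epsilon)\leq\epsilon\, g_t(\boldsymbol{z}_t^*)\to 0$. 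Strong convexity of $h_t$ (a quadratic minorant $\tfrac\mu2\norm{\boldsymbol{z}-\boldsymbol{z}_t^*}^2$ coming from its positive-definite Hessian together with the optimality of $\boldsymbol{z}_t^*$) then forces $\boldsymbol{z}_t^\epsilon\to\boldsymbol{z}_t^*$; continuity of $g_t$ gives $g_t(\boldsymbol{z}_t^\epsilon)\to g_t(\boldsymbol{z}_t^*)$, and squeezing the sandwich yields $l_\epsilon\to g_t(\boldsymbol{z}_t^*)=\tfrac12\norm{\boldsymbol{x}_{t+1}^{\boldsymbol{\theta}}-\boldsymbol{x}_{t+1}^*}^2$. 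Summing over the finitely many samples gives $L_\epsilon\to L^{\text{pred}}$.

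For part (a) the goal is to transfer strict complementarity from $\boldsymbol{z}_t^*$ to $\boldsymbol{z}_t^\epsilon$. Differentiability of $L^{\text{pred}}$ (Lemma \ref{lemma.lcpdiff}) means that for each $i$ exactly one of $\boldsymbol{\lambda}_t^*[i]>0$ or $\boldsymbol{\phi}_t^*[i]>0$ holds; I must show the box program (\ref{equ.contactnet.2}) is strictly complementary at $\boldsymbol{z}_t^\epsilon$ for all small $\epsilon$, which is the hypothesis of Lemma \ref{lemma.contactnet}(b). From $h_t(\boldsymbol{z}_t^\epsilon)\leq\epsilon\, g_t(\boldsymbol{z}_t^*)$ and $(\boldsymbol{\lambda}_t^{\epsilon})\tran\boldsymbol{\phi}_t^{\epsilon}\geq 0$, the residual satisfies $\norm{F\boldsymbol{\lambda}_t^\epsilon+\boldsymbol{q}_t-\boldsymbol{\phi}_t^\epsilon}=O(\sqrt\epsilon)$, so $\boldsymbol{e}_t^{\text{lcp}}=\tfrac1{\epsilon\gamma}(F\boldsymbol{\lambda}_t^\epsilon+\boldsymbol{q}_t-\boldsymbol{\phi}_t^\epsilon)=O(1/\sqrt\epsilon)$ grows strictly slower than $1/\epsilon$. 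The box-QP multipliers are $\nabla_{\boldsymbol{\lambda}}=C\tran\boldsymbol{e}_t^{\text{dyn}}+\tfrac1\epsilon\boldsymbol{\phi}_t^\epsilon+F\tran\boldsymbol{e}_t^{\text{lcp}}$ and $\nabla_{\boldsymbol{\phi}}=\tfrac1\epsilon\boldsymbol{\lambda}_t^\epsilon-\boldsymbol{e}_t^{\text{lcp}}$. For each $i$, the entry of $\boldsymbol{z}_t^*$ that is positive stays strictly positive along $\boldsymbol{z}_t^\epsilon\to\boldsymbol{z}_t^*$, so strict complementarity for that variable is immediate; for its partner variable the multiplier contains a $\tfrac1\epsilon$-scaled copy of that positive entry, which dominates the $O(1/\sqrt\epsilon)$ and bounded corrections and is therefore strictly positive for small $\epsilon$, settling strict complementarity whether or not the partner is active. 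Taking $\Delta$ to be the smallest of the finitely many per-sample, per-component thresholds produces a single $\Delta>0$ valid for all $t$.

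The routine part is (b); the crux is (a). The main obstacle is that the pointwise limit $\boldsymbol{z}_t^\epsilon\to\boldsymbol{z}_t^*$ does not by itself preserve strict complementarity, since a component could decay to zero without its multiplier turning strictly positive, leaving the active-set structure ambiguous. Resolving this hinges on the quantitative rate $\boldsymbol{e}_t^{\text{lcp}}=O(1/\sqrt\epsilon)$, which guarantees the $\tfrac1\epsilon$-scaled positive contribution wins in each multiplier; I would therefore concentrate the technical effort on pinning down this rate and verifying the componentwise dominance.
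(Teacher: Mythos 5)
Your proof is correct, and its execution differs genuinely from the paper's own (sketched) argument, especially for part (a). The paper proves (b) by citing standard penalty-function theory, and (a) by an abstract stability argument: it notes that strict complementarity of the LCP is equivalent to strict complementarity of the KKT system of the QP in Lemma \ref{lemma.cn.key}, observes that the KKT system of (\ref{equ.contactnet.2}) converges to that system as $\epsilon\to 0$, and then invokes Fiacco's sensitivity theorem to conclude that strict complementarity persists for $\epsilon$ in a neighborhood of $0$. You replace both citations with quantitative estimates: for (b), the sandwich $g_t(\boldsymbol{z}_t^\epsilon)\le l_\epsilon\le g_t(\boldsymbol{z}_t^*)$ together with the strong-convexity minorant of $h_t$ (valid because $0<\gamma<\sigma_{\min}(F\tran+F)$, via Lemma \ref{lemma.cn.key}) gives $\boldsymbol{z}_t^\epsilon\to\boldsymbol{z}_t^*$ and the loss convergence directly; for (a), the same bound $h_t(\boldsymbol{z}_t^\epsilon)\le\epsilon\, g_t(\boldsymbol{z}_t^*)$ yields $\norm{F\boldsymbol{\lambda}_t^{\epsilon}+\boldsymbol{q}_t-\boldsymbol{\phi}_t^{\epsilon}}=O(\sqrt{\epsilon})$, hence $\boldsymbol{e}_t^{\text{lcp}}=O(1/\sqrt{\epsilon})$, so in each box-QP stationarity condition the $\tfrac{1}{\epsilon}$-scaled copy of the strictly positive partner entry dominates the $O(1/\sqrt{\epsilon})$ and $O(1)$ terms, forcing the relevant multiplier strictly positive --- exactly the hypothesis of Lemma \ref{lemma.contactnet}(b), with complementary slackness then pinning the partner variable at zero. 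Your dominance step correctly isolates the one real subtlety (pointwise convergence $\boldsymbol{z}_t^\epsilon\to\boldsymbol{z}_t^*$ alone does not preserve strict complementarity), which the paper delegates wholesale to the cited sensitivity theory. The trade-off: the paper's route is shorter and situates the lemma inside standard perturbation/penalty machinery; yours needs no external theorem, produces an explicit $\Delta$ as a minimum of finitely many per-sample, per-component thresholds (consistent with the lemma being a statement at fixed $\boldsymbol{\theta}$ and $\mathcal{D}$), and the $O(\sqrt{\epsilon})$ residual rate is a sharper, reusable byproduct quantifying how fast the complementarity violation of the minimizer vanishes.
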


\begin{proof}
	We here only provide the  sketch for the proof of the above lemma due to page limits. 
	In the proof of claim (a), first, we can show that the strict complementarity in Lemma (\ref{lemma.lcpdiff}) is equivalently to say the  strict complementarity for (\ref{equ.cn.keyequ}) in Lemma \ref{lemma.cn.key}, i.e.,  the following LCP
	\begin{equation}\label{KKT2}
			\boldsymbol{0}\leq \small\left(\frac{\partial f}{\partial \boldsymbol{z}_t}\right)^{\prime}\perp\boldsymbol{z}_t^{}\geq \boldsymbol{0}\quad\text{with}\quad \vec{z}_t:=(\vec{\lambda}_t, \vec{\phi}_t) \quad\text{and}\quad  f(\boldsymbol{\lambda},\boldsymbol{\phi})\defeq\boldsymbol{\lambda}\tran\boldsymbol{\phi} +
			\frac{1}{\gamma}
			\norm{F\boldsymbol{\lambda}+\boldsymbol{q}_t-\boldsymbol{\phi}}^2
\end{equation}
is strict complementarity. Further, one can show that (\ref{KKT}) will converge to (\ref{KKT2}) as $\epsilon\rightarrow 0$. Since the strict complementarity preserves as $\epsilon$ falls in a small neighborhood around $0$ (this is similar to the proof of Theorem 2.1 in \citep{fiacco1976sensitivity}), one can say   $L_{\epsilon}(\boldsymbol{\theta},\mathcal{D})$ is differentiable with any small $\epsilon>0$ in the neighborhood around 0. The proof of claim (b) can directly follow the standard proof in penalty-based optimization \citep{fiacco1990nonlinear}.
\end{proof}

The above lemma has shown that the proposed violation-based $L_{\epsilon}(\boldsymbol{\theta}, \mathcal{D})$ and prediction-based loss  $L^{\text{pred}}(\boldsymbol{\theta},\mathcal{D})$ are essentially related to each other  in term of function itself and its differentiability with respect to $\boldsymbol{\theta}$. Specifically, the differentiability of prediction-based loss $L^{\text{pred}}(\boldsymbol{\theta},\mathcal{D})$ in Lemma \ref{lemma.lcpdiff}, i.e., strict complementarity for the LCP, always implies the differentiability of the violation-based  loss $L_{\epsilon}(\boldsymbol{\theta},\mathcal{D})$ for any choice of small $\epsilon>0$. Second, by controlling   $\epsilon \rightarrow 0$, the violation-based formulation  approximates to  prediction-based one. 

\medskip

In light of  all properties stated above, we now summarize the advantage of the proposed violation-based loss $L_{\epsilon}(\boldsymbol{\theta},\mathcal{D})$ over the prediction-based loss $L^{\text{pred}}(\boldsymbol{\theta},\mathcal{D})$.  First, $L_{\epsilon}(\boldsymbol{\theta},\mathcal{D})$ allows for more efficient computation of gradient, while  differentiation of $L_{\epsilon}(\boldsymbol{\theta},\mathcal{D})$ always requires the matrix inverse.   Second, $L_{\epsilon}(\boldsymbol{\theta},\mathcal{D})$ permits analytical Hessian information, which is important both for the algorithmic implementation and theoretical analysis, such Hessian matrix is difficult to obtain for  $L^{\text{pred}}(\boldsymbol{\theta},\mathcal{D})$. Finally, the violation-based loss $L_{\epsilon}(\boldsymbol{\theta},\mathcal{D})$ is flexible to approximate   $L^{\text{pred}}(\boldsymbol{\theta},\mathcal{D})$ in terms of both the  function itself and its differentiability with respect to $\boldsymbol{\theta}$,  by controlling the weight parameter $\epsilon$.

\section{Examples}

 In implementation, one way to enforce  Assumption \ref{assumption1} is using re-parameterization tricks:  by re-parameterizing $F=GG\tran+H-H\tran$ with any matrices  $G\in\mathbb{R}^{n_{\lambda}\times n_{\lambda}}$ and $H\in\mathbb{R}^{n_{\lambda}\times n_{\lambda}}$, one can  easily see that $F+F\tran\succeq0$. 
Also note that  $F$ and  $C$ in a LCS  (\ref{equ.lcs}) are permutation- and scaling- invariant with respect to  $\mathcal{D}=\{(\boldsymbol{x}_t^*,\boldsymbol{u}_t^*, \boldsymbol{x}_{t+1}^*)\}_{t=1}^N$.  Specifically, if  $(\vec{x}_t^*, \vec{u}_t^*, \vec{x}_{t+1}^*)$ satisfies  (\ref{equ.lcs}) with unobserved $\vec{\lambda}_t^*$, it also satisfies the following LCS with  $\boldsymbol{\tilde{\lambda}}_t^*=PS\vec{\lambda}_t$. 
\begin{equation}
 \begin{aligned}
 &\boldsymbol{x}_{t+1}^*=A\boldsymbol{x}_t^*+B\boldsymbol{u}_t^*+CS^{-1}P\tran\boldsymbol{\tilde{\lambda}}_t^*+\boldsymbol{d},\\
 &\boldsymbol{0}\leq\boldsymbol{\tilde{\lambda}}_t^* \perp PSD\boldsymbol{x}_t  +PSE\boldsymbol{u}_t + PSFS^{-1}P\tran\boldsymbol{\tilde{\lambda}}_t^* + \boldsymbol{c} \geq \boldsymbol{0}.
 \end{aligned}
\end{equation}
 Here $P\in\mathbb{R}^{n_\lambda \times n_\lambda}$,   is any permutation matrix and $S$ is any diagonal matrix with positive diagonal entries. To mitigate this ambiguity, we  add a regularizing cost   $R(\boldsymbol{\theta})=\omega\norm{C}_F^2$, where  $\norm{\cdot}_F$ is the matrix Frobenius norm and  $\omega$ is the weighting parameter.  We set $\omega=10^{-5}$ in our following experiments for both methods.

We randomly define a ground-truth LCS with $\boldsymbol{\theta}^*$, where all parameters are  selected from a uniform distribution in range $[-1, 1]$. To generate  training data $\mathcal{D}=\{(\boldsymbol{x}_t^*,\boldsymbol{u}_t^*, \boldsymbol{x}_{t+1}^*)\}_{t=1}^{N_{\text{train}}}$, we sample $\boldsymbol{x}^*_t$ and $\boldsymbol{u}^*_t$  from  uniform distributions over  $[-10,10]$ and  $[-5,5]$, respectively, and then solve  $\boldsymbol{x}^*_{t+1}$ based on $\boldsymbol{\theta}^*$. We also add zero-mean Gaussian noise with standard deviation $\sigma{=}{10^{-2}}$ to  $\mathcal{D}$. 
We  generate similar, but noiseless,  testing data $\mathcal{T}=\{(\boldsymbol{\bar{x}}_t,\boldsymbol{\bar{u}}_t, \boldsymbol{\bar{x}}_{t+1})\}_{t=1}^{N_{\text{test}}}$.
To evaluate  the learned LCS on  $\mathcal{T}$,  we define the following \emph{mean relative prediction error}, 
\begin{equation}
\small
	e_{\text{test}}=\frac{\sum_{t=1}^{N_{\text{test}}}\norm{{\boldsymbol{x}_{t+1}^{\vec{\theta}}-\boldsymbol{\bar{x}}_{t+1}}}^2}{\sum_{t=1}^{N_{\text{test}}}\norm{\boldsymbol{\bar{x}}_{t+1}}^2},
\end{equation}
with $\boldsymbol{x}_t^{\boldsymbol{\theta}}$  the   state predicted  by the learned LCS  at  $(\boldsymbol{\bar{x}}_t,\boldsymbol{\bar{u}}_t).
$ The size of $\mathcal{T}$ is $N_{\text{test}}{=}1000$.

The following simulations will evaluate  different aspects of the proposed violation-based learning formulation (\ref{equ.contactnet}), in comparison with the prediction-based learning formulation (\ref{equ.prednet}). For any learning formulation, 
each evaluation includes a total of 30 training rounds (otherwise stated), and each training round uses a random ground-truth LCS of $\boldsymbol{\theta}^*$ to generate  $\mathcal{D}$ and $\mathcal{T}$ and uses a  random initialization $\boldsymbol{\theta}$ to initialize the training. Each training round uses the Adam algorithm \citep{kingma2014adam} with the mini-batch size 200 and the learning rate $10^{-3}$ (other Adam parameters: $\beta_1=0.9$, $\beta_2=0.9$, $\epsilon=10^{-6}$). 
Because we randomize the LCS systems, and some systems may be easier to identify than others, we expect fairly high variance of the results.

\subsection{Results and Analysis}

The evaluations of  different aspects of the proposed violation-based formulation (\ref{equ.contactnet}) versus the prediction-based   formulation (\ref{equ.prednet}) are shown in Fig. \ref{fig}. In summary of all evaluations, one can conclude that the proposed violation-based learning outperforms the prediction-based learning, specifically  when (a) handling high numbers of system modes, e.g.,  $16$k modes at $n_{\lambda}=20$  as shown in Fig. \ref{fig:a}; (b) dealing with high system dimensions, e.g., $n_x=128$ as shown in Fig. \ref{fig:b}; and (c) learning \emph{ high-stiff LCS system} as   shown in Fig. \ref{fig:c}. Further, Fig. \ref{fig:d} and Fig. \ref{fig:d} show that  parameters $\gamma$ and $\epsilon$ in the violation-based loss (\ref{equ.contactnet}) are not sensitive to the performance, and thus finding  proper $\gamma$ and $\epsilon$ is not difficult in practice.

\begin{figure}[h]
	\centering     
	\subfigure[\small Varying  $n_{\lambda}$, with $n_x{=}10$ and $n_u{=}4$. ]{\label{fig:a}\includegraphics[width=0.32\textwidth]{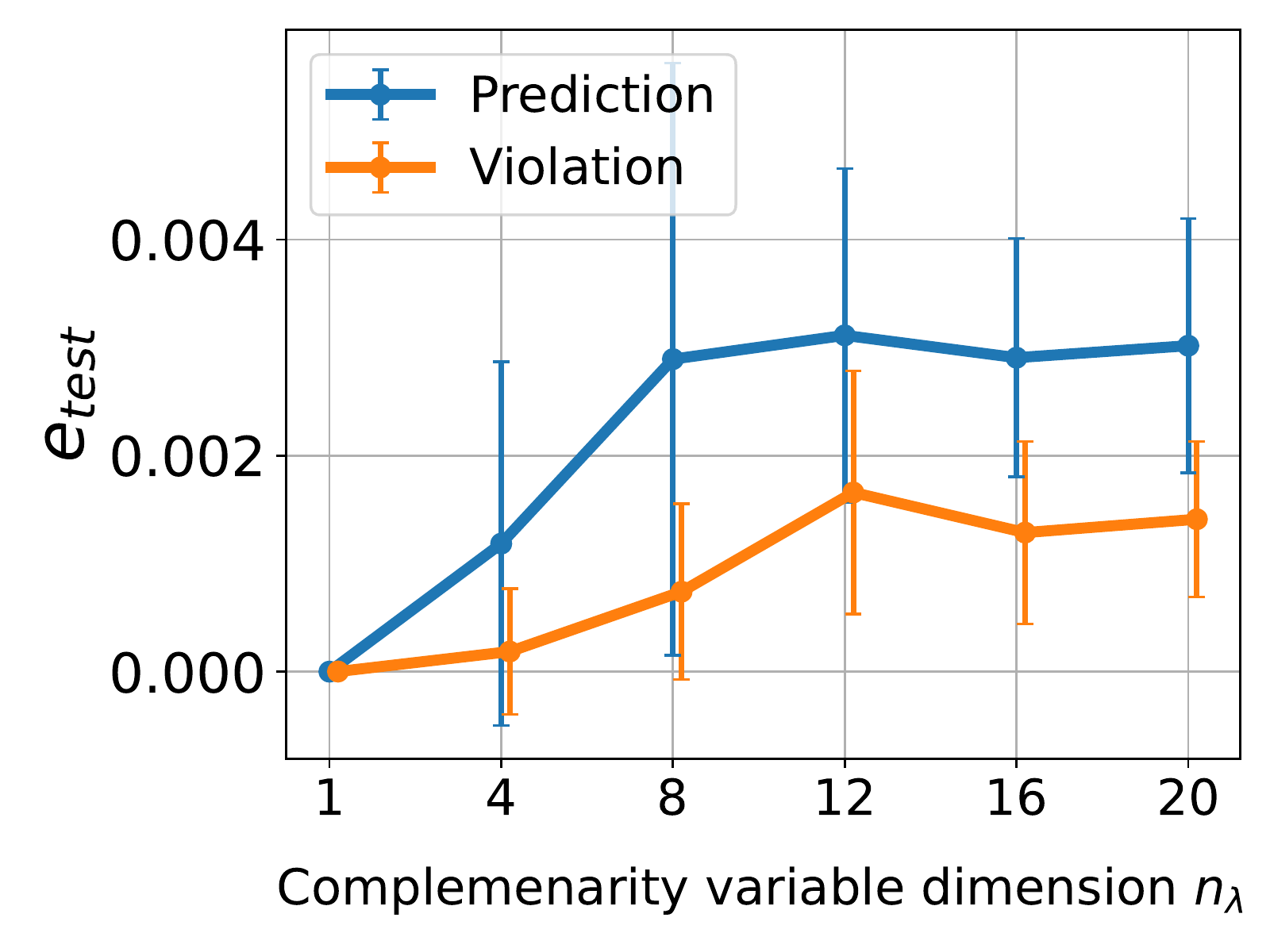}}
	\hfill
		\subfigure[\small Varying   $n_x$, with $n_u{=}4$ and $n_{\lambda}{=}10$.
	]{\label{fig:b}\includegraphics[width=0.325\textwidth]{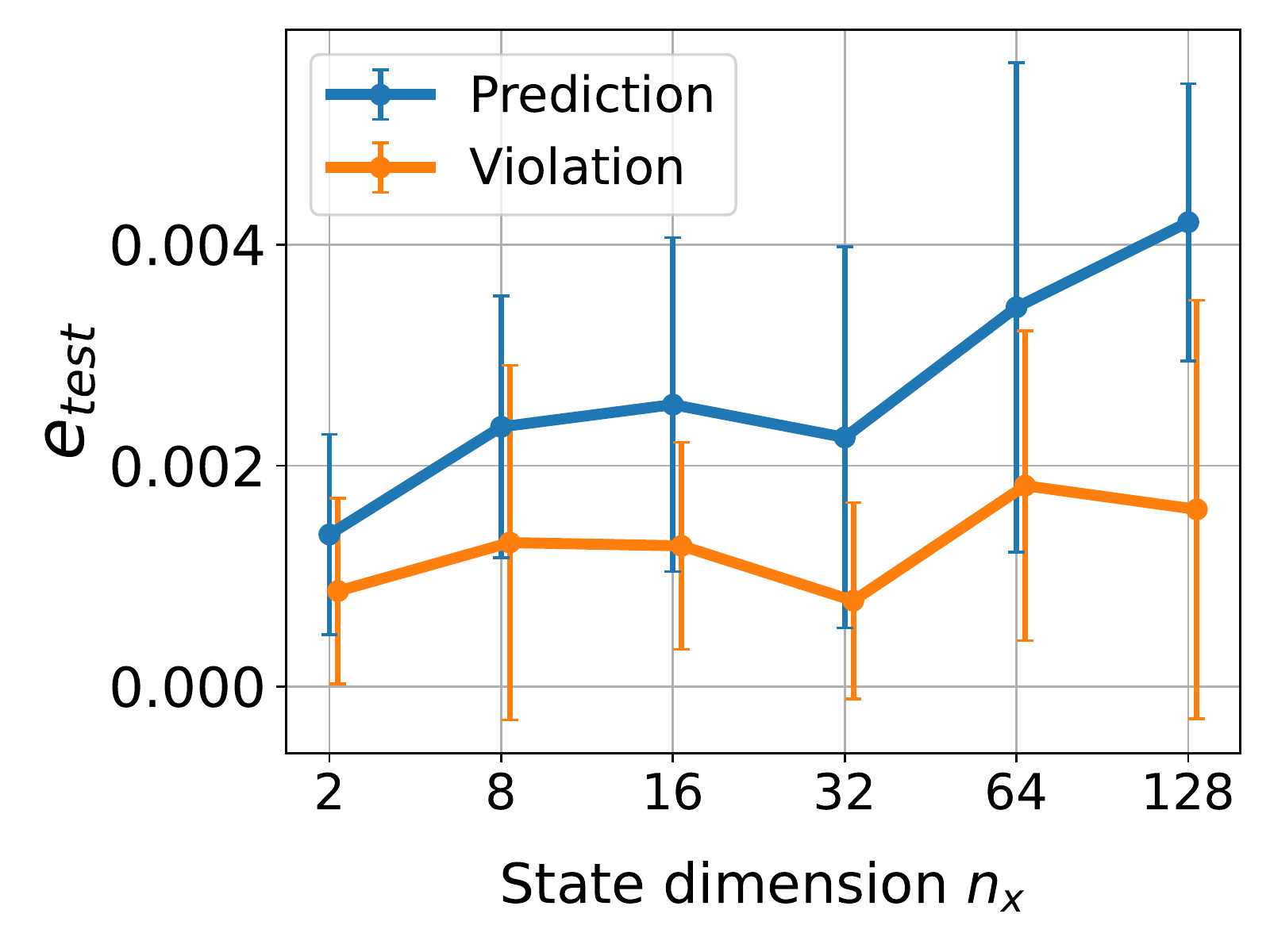}}
	\hfill
	\subfigure[\small Varying  $\small\sigma_{\min}\small(F{+}F\tran)$, with $n_x{=}8$, $n_u{=}2$ and $n_{\lambda}{=}10$.
	]{\label{fig:c}\includegraphics[width=0.325\textwidth]{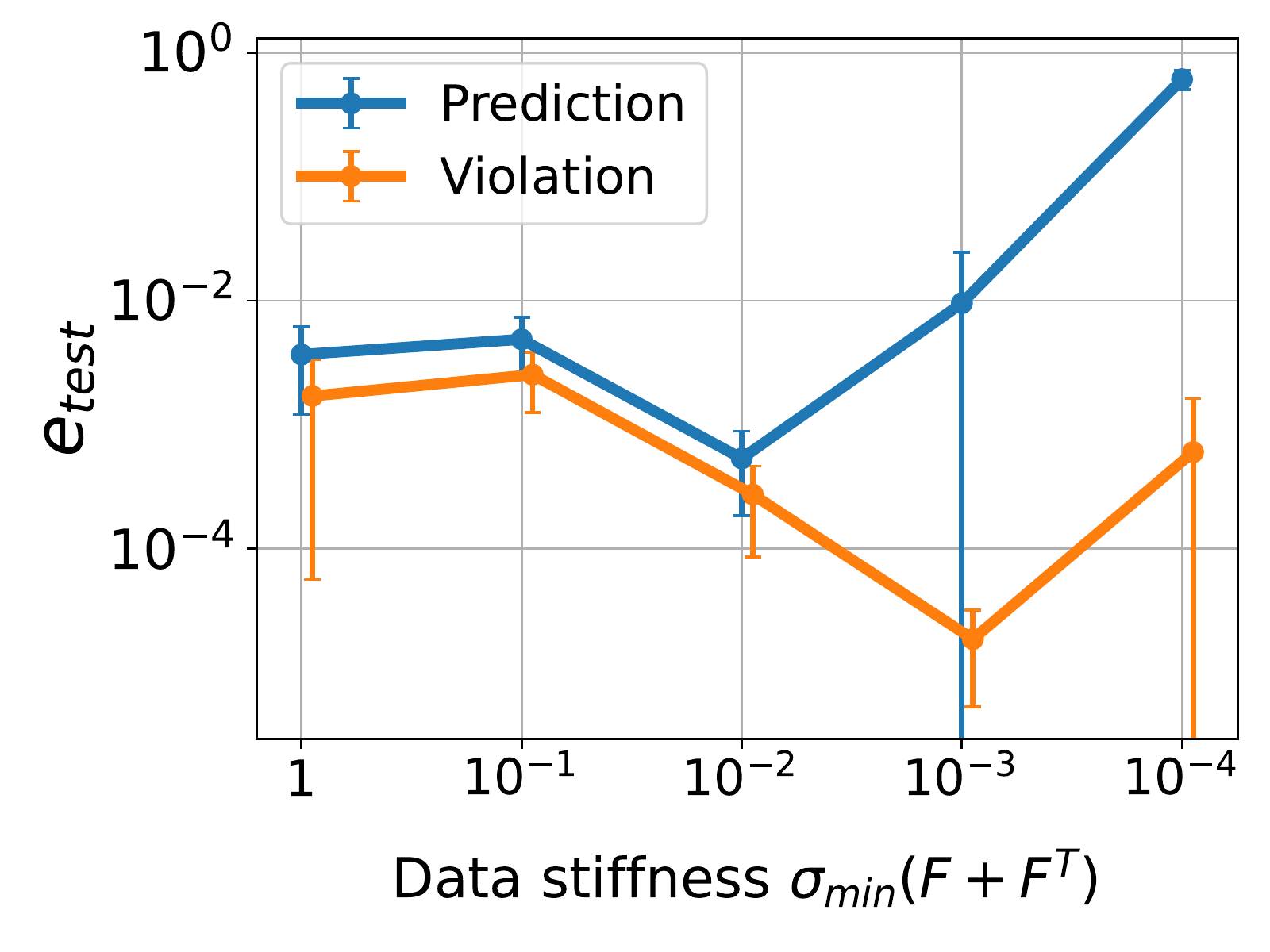}}
	\subfigure[\small  Varying $\gamma$.  $\small\sigma_{\min}\small(F{+}F\tran){=}1$, $n_x=4$, $n_{u}{=}2$, and $n_{\lambda}{=}4$. ]{\label{fig:d}\includegraphics[width=0.325\textwidth]{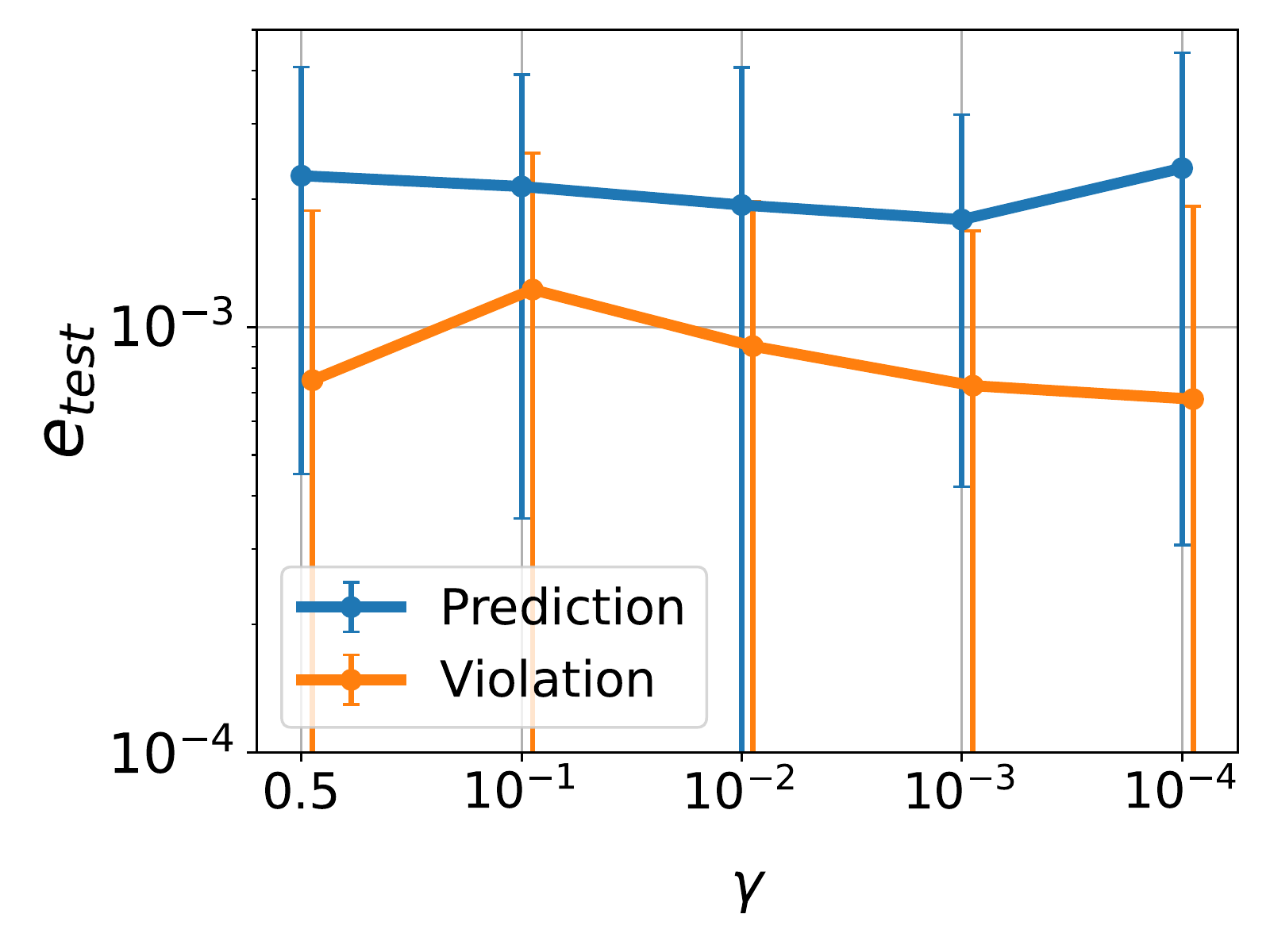}}
    \quad
	\subfigure[\small  Varying $\epsilon$.  $\small\sigma_{\min}\small(F{+}F\tran){=}1$, $n_x=4$, $n_{u}{=}2$, and $n_{\lambda}{=}4$. ]{\label{fig:e}\includegraphics[width=0.325\textwidth]{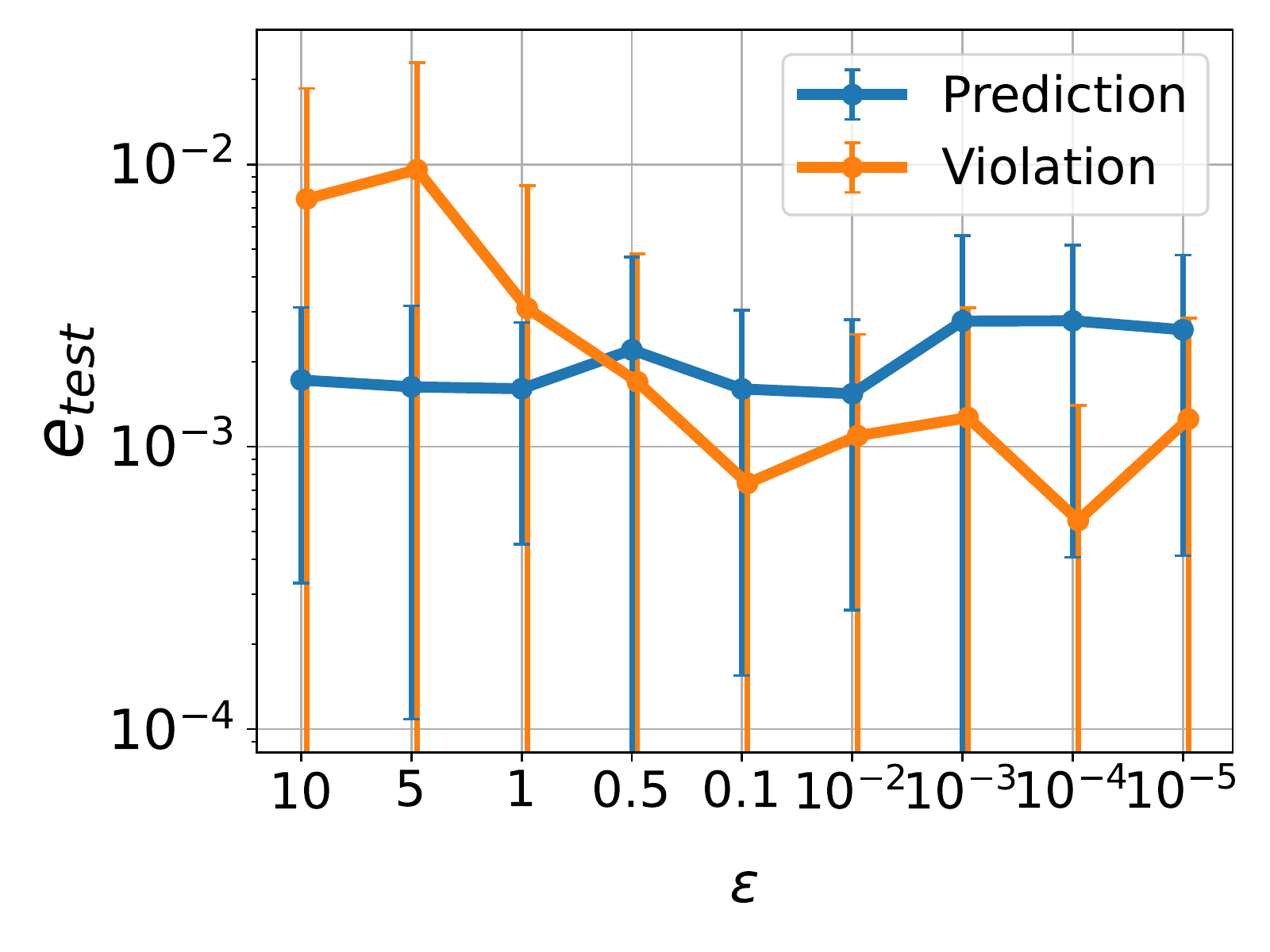}}
	
	\caption{Evaluations of different aspects of  the proposed violation-based learning (\ref{equ.contactnet})  in comparison with prediction-based learning (\ref{equ.prednet}).  }
	\label{fig}
\end{figure}

(a)  In Fig. \ref{fig:a}, we vary the number of complementarity constraints, i.e.,  $n_\lambda$, with fixed    $n_{x}=10$ and $n_{u}=4$.  $\mathcal{D}$ has the size of $N_{\text{train}}=50$k for all evaluations, and $\small\sigma_{\min}\small(F{+}F\tran)=1$. Note that the maximum achievable number of modes  depends on both  $n_{\lambda}$  and $n_{x}$, specifically is $2^{n_{\lambda}}$ if   $n_{x}=n_{\lambda}$. In our above evaluations, 
at $n_{\lambda}=20$,   $\mathcal{D}$ contains around 16k modes. In the violation-based loss, $\gamma=10^{-2}$ and $\epsilon=10^{-4}$. Fig. \ref{fig:a} shows an obvious advantage of the  violation-based method, especially for learning LCS with   high number of  modes, e.g., $16$k modes. This ability of the violation-based method  is also in contrast to the prior PWA work which only handles tens of modes, as discussed in introduction.

(b)  In Fig. \ref{fig:b}, we vary system state dimension $n_x$ with $n_{\lambda}=10$ and $n_{u}=4$. Here, each evaluation includes  15 training rounds, and other training settings follow  (a). The results show the advantage of the proposed violation-based method over the prediction-based method in handling high dimension of system states, such as $n_x\geq64$.

(c) Fig. \ref{fig:c} varies the system stiffness indicated by $\small\sigma_{\min}\small(F{+}F\tran)$, i.e., smaller $\small\sigma_{\min}\small(F{+}F\tran)$ means a stiffer system. Here, $n_x{=}8$, $n_u{=}2$, $n_{\lambda}{=}10$, and others follow (a). The results show \emph{a signifcant advantage of  the violation-based formulation (\ref{equ.contactnet}) over prediction-based learning (\ref{equ.prednet})}. By weighting  the prediction error against the LCP violation,  loss  (\ref{equ.contactnet})   generalizes better in the presence of stiffness and can be 
connected to  graph distance \citep{bianchini2021generalization}.

(d-e) Fig. \ref{fig:d} and \ref{fig:e} plot the performance
of the  violation-based  method for  different choices of 
 $\gamma$ and $\epsilon$, respectively. Here $n_x{=}4$, $n_{\lambda}{=}4$,  $n_u{=}2$,  $N_{\text{train}}{=}5$k, and others follow (a). Fig. \ref{fig:b} confirms Lemma \ref{lemma.cn.key} that any $0{<}\gamma{<}\sigma_{\min}\small(F{+}F\tran)$ does not significantly influence the  results. Fig. \ref{fig:e}, investigating the choice of $\epsilon$, shows, across many orders of magnitude, the results are largely invariant to this choice. For small $\epsilon$, following Lemma \ref{lemma.approximation}, we see the violation-based formulation perform similarly to the prediction-based form. Large $\epsilon$ can lead to results which grossly violate complemenatrity, and thus perform poorly.  Empirically, it is not difficult to find a $\epsilon$ in the middle range which does not much influence    the  performance.

\vspace{-5pt}
\section{Conclusion}
\vspace{-5pt}
In this paper, we have proposed a violation-based loss formulation which enables to learn a LCS using  gradient-based methods. The  violation-based loss is a sum of  dynamics prediction loss and a   novel complementarity violation loss.  We  have  shown several some properties  attained by this loss formulation.  The numerical results demonstrate a state-of-the-art ability to identify piecewise-affine dynamics, outperforming methods which must differentiate through  non-smooth linear complementarity problems.

\vspace{-5pt}

\acks{Toyota Research Institute provided funds to support this work. This work was also supported by the National Science Foundation under Grant No. CMMI-1830218 and an NSF Graduate Research Fellowship under Grant No. DGE-1845298.}

\vspace{-5pt}

\appendix
\subsection*{Appendix: Proof of Lemma \ref{lemma.lcpdiff}} \label{appendix.proof.lemma1}

\vspace{-5pt}
We first prove  if  $\boldsymbol{\lambda}_t^*=\lcp(F,D\boldsymbol{x}_t^*  +E\boldsymbol{u}_t^*+\vec{c})$ is the strictly complementarity, then matrix $S_t\defeq\diag(D\boldsymbol{x}_t^*  +E\boldsymbol{u}_t^* + F\boldsymbol{\lambda}_t^{*} + \boldsymbol{c}) +\diag(\boldsymbol{\lambda}_t^*)F$ 
is invertible. We prove this by contradiction. 
Suppose  $S_t$ is singular, and there exists a non-zero $\boldsymbol{v}\in\mathbb{R}^{n_\lambda}$ s.t. $ S_t\tran \boldsymbol{v}=\boldsymbol{0}$. 
Since $F$ satisfying Assumption \ref{assumption1} is the P-matrix, so is $F\tran$. Consider the two   cases. If   $\diag(\boldsymbol{\lambda}_t^{*}) \boldsymbol{v}=\boldsymbol{0}$,  thus $\diag(D\boldsymbol{x}_t^*  +E\boldsymbol{u}_t^* + F\boldsymbol{\lambda}_t^{*} +  \boldsymbol{c})\boldsymbol{v}=\boldsymbol{0}$. There must exist $i\in\{1,...,n_{\lambda}\}$ such that $\boldsymbol{\lambda}_t^{*}[i]=0$ and $\boldsymbol{v}[i]\neq0$. By the strict complementarity, $	(D\boldsymbol{x}_t^*  +E\boldsymbol{u}_t^* + F\boldsymbol{\lambda}_t^{*} + \boldsymbol{c})[i] \cdot \boldsymbol{v}[i]\neq0,$
which contradicts  $\diag(D\boldsymbol{x}_t^*  +E\boldsymbol{u}_t^* + F\boldsymbol{\lambda}_t^{*} +  \boldsymbol{c})\boldsymbol{v}=\boldsymbol{0}$. If  $\diag(\boldsymbol{\lambda}_t^{*}) \boldsymbol{v}\neq\boldsymbol{0}$,  we  have $	F\tran \diag(\boldsymbol{\lambda}_t^{*}) \boldsymbol{v}=-\diag(D\boldsymbol{x}_t^*  +E\boldsymbol{u}_t^* + F\boldsymbol{\lambda}_t^{*} + \boldsymbol{c}) \boldsymbol{v}.$
Then, for all $i\in\{1,2,...,n_\lambda\}$, $			\left(\diag(\boldsymbol{\lambda}_t^{*})\boldsymbol{v}\right)[i]\cdot \left(F\tran \diag(\boldsymbol{\lambda}_t^{*}) \boldsymbol{v}\right)[i]
=\left(\diag(\boldsymbol{\lambda}_t^{*})\boldsymbol{v}\right)[i]\cdot \left(-\diag(D\boldsymbol{x}_t^*  +E\boldsymbol{u}_t^* + F\boldsymbol{\lambda}_t^* + \boldsymbol{c}) \boldsymbol{v}\right)[i]=0.$ In fact, since $F\tran$ is a P-matrix, the above result contradicts with the reverse-sign property of P-matrix (see Theorem 3.3.4 in \citep{cottle2009linear}). Combine the above two cases, we  conclude that $S_t$ is non-singular.

Next, we prove  Lemma \ref{lemma.lcpdiff}. 
Define $\boldsymbol{g}(\boldsymbol{\lambda}_t, D, E, F, \boldsymbol{c})=	\diag(\boldsymbol{\lambda}_{t})\left(D\boldsymbol{x}_t^*  +E\boldsymbol{u}_t^* + F\boldsymbol{\lambda}_t + \boldsymbol{c}\right)=\boldsymbol{0}.$
It is obvious  that  $\boldsymbol{\lambda}_t^*=\lcp(F,D\boldsymbol{x}_t^*  +E\boldsymbol{u}_t^*+\vec{c})$  satisfies  the above equation. Next, we take the Jacobian Matrix of $\boldsymbol{g}(\boldsymbol{\lambda}_t, D, E, F, \boldsymbol{c})$ with respect to $\boldsymbol{\lambda}_t$ evaluated at $\boldsymbol{\lambda}^*$, leading to $	\frac{\partial \boldsymbol{g}}{\partial \boldsymbol{\lambda}_t}|_{\boldsymbol{\lambda}_t^*}=	\diag(D\boldsymbol{x}_t^*  +E\boldsymbol{u}_t^* + F{\boldsymbol{\lambda}_t^*} + \boldsymbol{c}) +\diag({\boldsymbol{\lambda}_t^*})F=S_t.$
Since $S_t$
is invertible due to the previous proof, by applying the implicit function theorem \citep{rudin1976principles}, one can reach the differentiability in Lemma \ref{lemma.lcpdiff}. This completes the proof. \qed

\bibliography{mybib}

\end{document}